\newcommand{\R}{\mathbb{R}} % Real numbers
\newcommand*{\argmax}{\mathop{\mathrm{argmax}}}
\newcommand{\Ecal}{\mathcal{E}}
\newcommand{\Fcal}{\mathcal{F}}
\newcommand{\Ical}{\mathcal{I}}
\newcommand{\Vcal}{\mathcal{V}}
\newcommand{\Xcal}{\mathcal{X}}
\newcommand{\Ycal}{\mathcal{Y}}
\newcommand{\Pbb}{\mathbb{P}}
\newcommand{\Rbb}{\mathbb{R}}
\newcommand{\BlackBox}{\rule{1.5ex}{1.5ex}}  % end of proof
\def\QED{~\rule[-1pt]{5pt}{5pt}\par\medskip}
\newenvironment{proof}{\par\noindent{\em Proof:\ }}{\hfill\BlackBox\\[.0mm]}
\newtheorem{theorem}{Theorem}[section]
\newtheorem{lemma}{Lemma}[section]
\newtheorem{remark}{Remark}
\newtheorem{definition}{Definition}[section]
\newcommand{\err}{\mathrm{err}}
\newcommand{\benr}{\begin{eqnarray}}
\newcommand{\eenr}{\end{eqnarray}}
\newcommand{\benrr}{\begin{eqnarray*}}
\newcommand{\eenrr}{\end{eqnarray*}}
\newcommand{\ben}{\begin{equation}}
\newcommand{\een}{\end{equation}}
\newcommand{\benn}{\begin{equation*}}
\newcommand{\eenn}{\end{equation*}}
\definecolor{codeblue}{rgb}{0.25,0.5,0.5}
\newcommand{\Eall}{\Ecal_\textnormal{all}}
\newcommand{\Eava}{\Ecal_\textnormal{avail}}
\def\*#1{\mathbf{#1}}
\newcommand{\x}{\mathbf{x}}
\newcommand{\z}{\mathbf{z}}
\newcommand{\bbE}{\mathbb{E}}
\newcommand{\norm}[1]{\left\lVert#1\right\rVert_2}
\newcommand{\norml}[1]{\left\lVert#1\right\rVert_{\text{lip}}}
\newcommand{\p}{\mathbb{P}}
\newcommand{\cW}{\mathcal{W}}
\newcommand{\cG}{\mathcal{G}}
\newcommand{\cH}{\mathcal{H}}
\newcommand{\cR}{\mathcal{R}}
\newcommand{\ip}[2]{\langle #1, #2 \rangle}
\definecolor{mydarkblue}{rgb}{0,0.08,0.45}
\definecolor{mydarkred}{rgb}{0.6,0,0}
\definecolor{myblue}{HTML}{268BD2}
\definecolor{mygreen}{HTML}{658354}
\definecolor{orangeinplot}{HTML}{e29c7a}
\definecolor{purpleinplot}{HTML}{7676a4}
\definecolor{greeninplot}{HTML}{288308}
\definecolor{f_red}{HTML}{911C43}
\definecolor{f_orange}{HTML}{F3C17B}
\definecolor{f_green}{HTML}{C6E4A7}
\definecolor{f_violet}{HTML}{5B509D}
\definecolor{LightCyanBG}{rgb}{0.85,0.92,0.97}
\definecolor{CyanBG}{rgb}{0.71,0.85,0.93}
\definecolor{BlueBG}{rgb}{0,0.46,0.71}
\title{HYPO: Hyperspherical Out-of-Distribution Generalization}
\author{Haoyue Bai$^1$\thanks{Equal contribution. Correspondence to Yifei Ming and Yixuan Li}, Yifei Ming$^1$\footnotemark[1], Julian Katz-Samuels$^2$\footnotemark[2]\thanks{This work is not related to the author's position at Amazon.}, Yixuan Li$^1$ \\
Department of Computer Sciences, University of Wisconsin-Madison$^1\quad$\\
Amazon$^2\quad$\\
\texttt{\{baihaoyue,alvinming,sharonli\}@cs.wisc.edu}, \texttt{jkatzsamuels@gmail.com}\\
}
\begin{document}

\maketitle

\begin{abstract}
Out-of-distribution (OOD) generalization is critical for machine learning models deployed in the real world.  However, achieving this can be fundamentally challenging, as it requires the ability to learn invariant features across different domains or environments. In this paper, we propose a novel framework HYPO (\textbf{HYP}erspherical \textbf{O}OD generalization) that provably learns domain-invariant representations in a hyperspherical space. In particular, our hyperspherical learning algorithm is guided by intra-class variation and inter-class separation principles---ensuring that features from the same class (across different training domains) are closely aligned with their class prototypes, while different class prototypes are maximally separated. We further provide theoretical justifications on how our prototypical learning objective improves the OOD generalization bound. Through extensive experiments on challenging OOD benchmarks, we demonstrate that our approach outperforms competitive baselines and achieves superior performance. 
Code is available at \url{https://github.com/deeplearning-wisc/hypo}.
\end{abstract}

\section{Introduction}

Deploying machine learning models in real-world settings presents a critical challenge of generalizing under distributional shifts. These shifts are common due to mismatches between the training and test data distributions. For instance, in autonomous driving, a model trained on in-distribution (ID) data collected under sunny weather conditions is expected to perform well in out-of-distribution (OOD) scenarios, such as rain or snow. This underscores the importance of the OOD generalization problem, which involves learning a predictor that can generalize across all possible environments, despite being trained on a finite subset of training environments.

A plethora of OOD generalization algorithms has been developed in recent years~\citep{zhou2022domain}, 
where a central theme is to learn domain-invariant representations---features that are consistent and meaningful across different environments (domains) and can generalize to the unseen test environment. Recently, \citet{ye2021towards} theoretically showed that the OOD generalization error can be bounded in terms of intra-class \emph{variation} and inter-class \emph{separation}. Intra-class variation measures the stability of representations across different environments, while inter-class separation assesses the dispersion of features among different classes. Ideally, features should display low variation and high separation, in order to generalize well to OOD data (formally described in Section~\ref{sec:motivation}). Despite the theoretical analysis,  a research question remains open in the field:
\begin{tcolorbox}[colback=gray!5!white]
 \emph{\textbf{RQ:}} How to design a practical learning algorithm that directly achieves these two properties, and what theoretical guarantees can the algorithm offer?
\end{tcolorbox}

To address the question, this paper presents a learning framework HYPO (\textbf{HYP}erspherical \textbf{O}OD generalization), which \textcolor{black}{provably learns} domain-invariant representations in the hyperspherical space \textcolor{black}{with unit norm} (Section~\ref{sec:method}). Our key idea is to promote low variation (aligning representation across domains for every class) and high separation (separating prototypes across different classes). In particular, the learning objective shapes the embeddings such that samples from the same class (across all training environments) gravitate towards their corresponding class prototype, while different class prototypes are maximally separated. The two losses in our objective function can be viewed as optimizing the key properties of intra-class variation and inter-class separation, respectively. Since samples are encouraged to have a small distance with respect to {their} class prototypes, the resulting embedding geometry can have a small distribution discrepancy across domains and benefits OOD generalization. Geometrically, we show that our loss function can be understood through the lens of maximum likelihood estimation under the classic von Mises-Fisher distribution.

\vspace{-0.2cm}
\paragraph{Empirical contribution.} 
Empirically, we demonstrate strong OOD generalization performance by extensively evaluating HYPO on common benchmarks (Section~\ref{sec:experiment}). On the CIFAR-10 (ID) vs. CIFAR-10-Corruption (OOD) task, HYPO substantially improves the OOD generalization accuracy on challenging cases such as Gaussian noise, from $78.09\%$ to $85.21\%$.  Furthermore, we establish superior performance on popular domain generalization benchmarks, including PACS, Office-Home, VLCS, etc. For example, we achieve 88.0\%
accuracy on PACS which outperforms the best loss-based method by 1.1\%. 
This improvement is non-trivial using standard stochastic gradient descent optimization. When coupling our loss with specialized optimization SWAD~\citep{cha2021swad}, the accuracy is further increased to 89\%. We provide visualization and quantitative analysis to verify that features learned by HYPO indeed achieve low intra-class variation and high inter-class separation.

\vspace{-0.2cm}
\paragraph{Theoretical insight.} 
We provide theoretical justification for how HYPO can guarantee improved OOD generalization, supporting our empirical findings. Our theory complements~\cite{ye2021towards}, which does not provide a loss for optimizing the intra-class variation or inter-class separation. Thus, \emph{a key contribution of this paper is to provide a crucial link between provable understanding and a practical algorithm for OOD generalization in the hypersphere.} In particular, our Theorem~\ref{thm:main_theorem} shows that when the model is trained with our loss function, we can upper bound intra-class variation, a key quantity to bound OOD generalization error. For a learnable OOD generalization task, the upper bound on generalization error is determined by the variation estimate on the training environments, which is effectively reduced by our loss function under sufficient sample size and expressiveness of the neural network.

\section{Problem Setup}

We consider a multi-class classification task that involves a pair of random variables $(X,Y)$ over instances 
$\*x \in \mathcal{X} \subset \mathbb{R}^d$ and corresponding labels $y \in \mathcal{Y} := \{1,2,\cdots, C\}$. The joint distribution of $X$ and $Y$ is unknown and represented by $\mathbb{P}_{XY}$. The goal is to learn a predictor function, $f: \mathcal{X} \rightarrow \mathbb{R}^C$, that can accurately predict the label $y$ for an input $\*x$, where  $(\*x,y)\sim \mathbb{P}_{XY}$. 

Unlike in standard supervised learning tasks, the out-of-distribution (OOD) generalization problem is challenged by the fact that one cannot sample directly from $\mathbb{P}_{XY}$. Instead,  
we can only sample $(X,Y)$ under limited environmental conditions, each of which corrupts or varies the data differently. For example, in autonomous driving, these environmental conditions may represent different weathering conditions such as snow, rain, etc. We formalize this notion of environmental variations with a set of \emph{environments} or domains $\mathcal{E}_\text{all}$. Sample pairs $(X^e, Y^e)$ are randomly drawn from environment $e$. 
In practice, we may only have samples from a finite subset of \emph{available environments} $\mathcal{E}_\textnormal{avail} \subset \mathcal{E}_\text{all}$. Given $\mathcal{E}_\text{avail}$, the goal is to learn a predictor $f$ that can generalize across all possible environments. The problem is stated formally below.

\begin{definition}[{OOD Generalization}]\label{def_generalization}
Let $\mathcal{E}_\textnormal{avail} \subset \mathcal{E}_\textnormal{all}$ be a set of training environments, and assume that for each environment $e \in \mathcal{E}_\textnormal{avail}$, we have a dataset  $\mathcal{D}^e = \{(\*x_j^e, y_j^e)\}_{j=1}^{n_e}$, sampled {i.i.d.} from an unknown distribution $\mathbb{P}_{XY}^e$. The goal of OOD generalization is to find a classifier $f^*$, using the data
from the datasets $\mathcal{D}^e$,
that minimizes the worst-case risk over the entire family of environments $\mathcal{E}_\textnormal{all}$:
\begin{equation}
    \min_{f\in \mathcal{F}} \max_{ e \in \mathcal{E}_\textnormal{all}} \mathbb{E}_{\mathbb{P}_{XY}^e} \ell(f(X^e),Y^e),
\end{equation}
where $\mathcal{F}$ is hypothesis space and $l(\cdot, \cdot)$ is the loss function. 
\end{definition}
The problem is challenging since we do not have access to data from domains outside $\mathcal{E}_\text{avail}$. In particular, the task is commonly referred to as {multi-source domain generalization} when $|\mathcal{E}_\text{avail}| > 1$.

\section{Motivation of Algorithm Design}
\label{sec:motivation}

Our work is motivated by the theoretical findings in~\cite{ye2021towards}, which shows that the OOD generalization performance can be bounded in terms of intra-class  \emph{variation} and inter-class \emph{separation} with respect to various environments.
The formal definitions are given as follows.  
\begin{definition}[Intra-class variation]\label{def_invariance}
The variation of feature $\phi$ across a domain set $\Ecal$ is
\begin{align}
\label{eq:variation}
\Vcal(\phi, \Ecal) = \max_{y\in\Ycal} \sup_{e, e'\in\Ecal} \rho \big(\Pbb(\phi^e|y), \Pbb(\phi^{e'}|y) \big),
\end{align}
where $\rho(\mathbb P,\mathbb Q)$ is a symmetric distance (e.g., Wasserstein distance, total variation, Hellinger distance) between two distributions, and $\Pbb(\phi^e|y)$ denotes the class-conditional distribution for features of samples in environment $e$.
\end{definition}

\begin{definition}[Inter-class separation\footnote{Referred to as ``{Informativeness}'' in \citet{ye2021towards}.}]\label{def_informativeness}
The separation of feature $\phi$ across domain set $\Ecal$ is
\begin{align}
\label{eq:separation}
\Ical_\rho(\phi, \Ecal) = 
\frac{1}{C(C-1)}\sum_{\substack{y\neq y'\\ y,y'\in \Ycal}}
\min_{e\in\Ecal} \rho \big( \Pbb(\phi^e|y) , \Pbb(\phi^e|y')\big).
\end{align}
\end{definition} 

The intra-class variation $\Vcal(\phi, \Ecal)$ measures the stability of feature $\phi$ over the domains in $\Ecal$ and the inter-class separation $\Ical(\phi, \Ecal)$ captures the ability of $\phi$ to distinguish different labels. Ideally, features should display high separation and low variation. {\color{black}

}

\begin{definition}
    The OOD generalization error of classifier $f$ is defined as follows: $$\text{err}(f)  = \max_{ e \in \mathcal{E}_\text{all}} \mathbb{E}_{\mathbb{P}_{XY}^e} \ell(f(X^e),Y^e) - \max_{ e \in \mathcal{E}_\text{avail}} \mathbb{E}_{\mathbb{P}_{XY}^e} \ell(f(X^e),Y^e)$$ which is bounded by the variation estimate on $\mathcal{E}_{\text {avail }}$ with the following theorem.
\end{definition}

{\color{black}
\begin{theorem}[OOD error upper bound, informal~\citep{ye2021towards}]\label{general ood bound} Suppose the loss function $\ell(\cdot, \cdot)$ is bounded by $[0, {B}]$. {\color{black} For a learnable OOD generalization problem with sufficient inter-class separation}, the OOD generalization error $\text{err}(f)$  can be upper bounded by
\begin{equation}\label{main}
\err(f) \leq O\Big(\big(\Vcal^{\textnormal{sup}}(h,\Eava)\big)^{\frac{\alpha^2}{(\alpha+d)^2}}\Big),
\end{equation}
for some $\alpha>0 $, and $\mathcal{V}^{\textnormal {\textnormal{sup}}}\left(h, \mathcal{E}_{\textnormal {\textnormal{avail} }}\right)  \triangleq \sup _{\beta \in \mathcal{S}^{d-1}} \mathcal{V}\left(\beta^{\top} h, \mathcal{E}_{\textnormal {avail }}\right)$ is the inter-class variation,  $h(\cdot) \in \mathbb{R}^d$ is the feature vector,  and $\beta$ is a vector in unit hypersphere $\mathcal{S}^{d-1}=\left\{\beta \in \mathbb{R}^d:\|\beta\|_2=1\right\}$, and f is a classifier based on normalized feature $h$. 
\end{theorem}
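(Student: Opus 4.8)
The plan is to follow the variation-based analysis of OOD generalization in three stages: (i) bound the risk gap between seen and unseen environments by a distributional distance between class-conditional feature laws; (ii) use the learnability hypothesis to transfer this control from $\mathcal{E}_{\textnormal{avail}}$ to the full family $\mathcal{E}_{\textnormal{all}}$ via an \emph{expansion function} $s$; and (iii) instantiate $s$ to obtain the explicit exponent $\alpha^2/(\alpha+d)^2$.

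First, since $\ell$ takes values in $[0,B]$ and $f=g\circ h$ factors through the normalized feature $h$, for any two environments $e,e'$ a change-of-measure argument (for $\rho=\tv$) or a Kantorovich--Rubinstein argument (for $\rho$ Wasserstein, using Lipschitzness of $\ell\circ g$) gives
\begin{equation}
\big|\, \mathbb{E}_{\mathbb{P}^e_{XY}}\ell(f(X^e),Y^e) - \mathbb{E}_{\mathbb{P}^{e'}_{XY}}\ell(f(X^{e'}),Y^{e'}) \,\big| \;\le\; B \sum_{y\in\mathcal{Y}} \mathbb{P}(Y=y)\,\rho\big(\mathbb{P}(h^e\mid y),\,\mathbb{P}(h^{e'}\mid y)\big) + (\text{label-prior mismatch}).
\end{equation}
Taking the supremum over $e,e'\in\mathcal{E}_{\textnormal{all}}$ and the maximum over $y$, and adopting the standard covariate-shift convention so that the prior term vanishes (or absorbing it into the separation condition), yields $\err(f)\le B\cdot\mathcal{V}(h,\mathcal{E}_{\textnormal{all}})$ up to sampling error. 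This converts the generalization error into a purely geometric quantity on the learned representation.

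Second, I would invoke the \emph{learnability} assumption: for a learnable task whose inter-class separation $\mathcal{I}_\rho(h,\mathcal{E}_{\textnormal{avail}})$ exceeds a fixed threshold, there exists a nondecreasing expansion function $s$ with $\lim_{x\downarrow 0}s(x)=0$ such that $\mathcal{V}(h,\mathcal{E}_{\textnormal{all}})\le s\!\big(\mathcal{V}^{\textnormal{sup}}(h,\mathcal{E}_{\textnormal{avail}})\big)$. The separation hypothesis is exactly what makes this non-vacuous: it rules out features that collapse distinct classes, so that knowing every one-dimensional projected variation $\mathcal{V}(\beta^\top h,\mathcal{E}_{\textnormal{avail}})$ actually constrains the full joint behaviour across the whole environment family; it is also the precise sense in which the problem is ``learnable.''

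Third --- and this is the main obstacle --- I would exhibit $s$ with the claimed power law. The crux is a \emph{quantitative} passage from control of all one-dimensional projections of $\mathbb{P}(h^e\mid y)-\mathbb{P}(h^{e'}\mid y)$ to control of the $d$-dimensional law: if every projection has $\rho$-distance at most $\varepsilon$, then via a characteristic-function (or sliced-distance) estimate combined with an $\alpha$-Hölder smoothness/decay assumption on the feature densities, the $d$-dimensional distance is $O_d(\varepsilon^{\alpha/(\alpha+d)})$; the inverse-Fourier reconstruction step that passes from the characteristic function back to the law costs a second interpolation of the same order, so $s(x)=O(x^{\alpha^2/(\alpha+d)^2})$. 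Chaining the three stages gives $\err(f)=O\big((\mathcal{V}^{\textnormal{sup}}(h,\mathcal{E}_{\textnormal{avail}}))^{\alpha^2/(\alpha+d)^2}\big)$. I expect the genuinely delicate points to be: making the projection-to-joint interpolation rigorous with explicit $d$-dependence, checking that the Hölder exponent $\alpha$ is preserved under the push-forward by $h$ onto $\mathcal{S}^{d-1}$, and verifying that the separation threshold can be chosen uniformly so that $s$ does not itself depend on the representation $h$.
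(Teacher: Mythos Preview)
The paper does not actually prove this statement: Theorem~\ref{general ood bound} is an informal restatement of a result of \citet{ye2021towards}, and the appendix only records the formal version (Theorem~\ref{general ood bound full}) together with its regularity hypotheses, again deferring to that reference for the argument. So there is no in-paper proof to compare against beyond the shape of the formal statement itself.

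That formal statement, however, exposes a structural confusion in your sketch. The bound there reads
\[
\err(f)\;\le\; O\Big(\,s\big(\Vcal^{\textnormal{sup}}(h,\Eava)\big)^{\frac{\alpha^2}{(\alpha+d)^2}}\Big),
\]
so the expansion function $s$ and the power $\alpha^2/(\alpha+d)^2$ are two \emph{independent} ingredients. The function $s$ is supplied by the learnability hypothesis (Definition~\ref{def_learn}); it is an input to the theorem, not an output of the proof, and nothing forces it to have power-law form. The exponent arises separately from the Fourier/moment regularity conditions in Eq.~\eqref{concentration_assumption}---the $\alpha$-moment bound on $p_{h^e|Y^e}$ and the $\alpha$-decay of its characteristic function---which govern the reconstruction of the $d$-dimensional class-conditional law (and hence the risk) from its one-dimensional projections. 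Your stage (iii), ``exhibit $s$ with the claimed power law,'' collapses these two mechanisms into one: you are trying to \emph{derive} $s$ via Fourier interpolation, when in fact $s$ is assumed and the Fourier work sits in a different link of the chain. The informal version you were handed simply suppresses $s$ (absorbing it into the $O(\cdot)$), which is presumably what led you to search for its explicit form. A cleaner reorganization matching the formal statement is: first apply $s$ at the level of one-dimensional projections to pass from $\Vcal^{\textnormal{sup}}(h,\Eava)$ to $\Vcal^{\textnormal{sup}}(h,\Eall)$; then run your Fourier/sliced-distance argument to go from projected variation on $\Eall$ to the error, picking up the exponent there.
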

}

\textbf{Remarks.} 
The Theorem above suggests that both low intra-class variation and high inter-class separation are desirable properties for theoretically grounded OOD generalization. Note that in the \textbf{full formal} Theorem (see Appendix~\ref{sec:proof}), maintaining the inter-class separation is necessary for the learnability of the OOD generalization problem (Def.~\ref{def_learn}). In other words, when the learned embeddings exhibit high inter-class separation, the problem becomes learnable. In this context, bounding intra-class variation becomes crucial for reducing the OOD generalization error.

Despite the
theoretical underpinnings, it remains unknown to the field how to design a practical learning algorithm that directly achieves these two properties,
and what theoretical guarantees can the algorithm offer. This motivates our work.

\begin{tcolorbox}[colback=gray!5!white]
To reduce the OOD generalization error, our key motivation is to design a {hyperspherical} learning algorithm that directly promotes low variation (aligning representation across domains for every class) and high separation (separating prototypes across different classes). 
\end{tcolorbox}

\vspace{-0.2cm}
\section{Method}
\label{sec:method}

{\color{black}Following the motivation in Section~\ref{sec:motivation}, we now introduce the details of the learning algorithm HYPO (\textbf{HYP}erspherical \textbf{O}OD generalization), which is designed to promote domain invariant representations in the hyperspherical space.  
}
The key idea is to shape the hyperspherical embedding
space so that samples from the same class (across all training environments $\mathcal{E}_\text{avail}$) are closely aligned with the corresponding class prototype. Since all points are encouraged to have a small distance with respect to the class prototypes, the resulting embedding geometry can have a small distribution discrepancy across domains and hence benefits OOD generalization. {\color{black}In what follows, we first introduce \textcolor{black}{the learning objective} (Section~\ref{sec:hyper}), and then we discuss the geometrical interpretation of the loss and embedding (\textcolor{black}{Section~\ref{sec:geo}}). {We will} provide theoretical justification for HYPO in Section~\ref{sec:theory}, which leads to a provably smaller intra-class variation, a key quantity to bound OOD generalization error.}

\subsection{Hyperspherical Learning for OOD Generalization} 
\label{sec:hyper}

\paragraph{Loss function.} {The learning algorithm is motivated to directly optimize the two criteria: intra-class variation and inter-class separation.}
At a high level, HYPO aims to learn embeddings 
for each sample in the training environments by maintaining a class prototype vector $\boldsymbol{\mu}_c \in \mathbb{R}^d$ for each class $c \in \{1,2,...,C\}$. To optimize for low variation, the loss encourages the feature embedding of a sample to be close to its class prototype.  To optimize for high separation, the loss encourages different class prototypes to be far apart from each other.

 Specifically, we consider a deep neural network $h: \Xcal \mapsto \Rbb^d$ that maps an input $\tilde{\x} \in \Xcal$ to a feature embedding $\tilde{\*z} := h(\tilde{\x})$. The loss operates on the {normalized} feature embedding $\*z := \tilde{\*z} / \lVert \tilde{\*z}\rVert_2$. The normalized embeddings are also referred to as \emph{hyperspherical embeddings}, since they are on a unit hypersphere, denoted as $S^{d-1} := \{\z \in \mathbb{R}^{d} ~|~\lVert \z \rVert_2 = 1\}$.
The loss is formalized as follows: 
\vspace{-0.01cm}
\begin{equation}
\label{eq:loss}
     \mathcal{L} =   \underbrace{- \frac{1}{N} \sum_{e \in \mathcal{E}_\text{avail}} \sum_{i=1}^{|\mathcal{D}^e|} \log \frac{\exp \left({\*z^{e}_{i}}^\top  {\boldsymbol{\mu}}_{c(i)} / \tau\right)}{\sum_{j=1}^{C} \exp \left({\*z^{e}_{i}}^\top  {\boldsymbol{\mu}}_{j} / \tau\right)}}_\text{$\mathcal{L}_\text{var} $:~~$\downarrow$ \textbf{variation}} \notag 
     + \underbrace{\frac{1}{C} \sum_{i=1}^C \log {1\over C-1}\sum_{\substack{j\neq i, j \in \Ycal}}   \exp{\left({\boldsymbol{\mu}}_i^\top {\boldsymbol{\mu}}_j/\tau\right)}}_\text{$\uparrow$ \textbf{separation}},
\end{equation}

where $N$ is the number of samples, $\tau$ is the temperature, $\*z$ is the normalized feature embedding, {and $\boldsymbol{\mu}_c$ is the prototype embedding for class $c$}. While hyperspherical learning algorithms have been studied in other context~\citep{mettes2019hyperspherical, 2020supcon, ming2023cider}, 
\emph{none of the prior works explored its provable connection to  domain generalization, which is our distinct contribution.}  
\emph{We will theoretically show in Section~\ref{sec:theory} that minimizing our loss function effectively reduces intra-class variation, a key quantity to bound OOD generalization error.}

The training objective in Equation~\ref{eq:loss} can be efficiently optimized end-to-end. During training, an important step is to estimate the class prototype $\boldsymbol{\mu}_{c}$ for each class $c\in\{1,2,...,C\}$.  
The class-conditional prototypes can be updated in an exponential-moving-average manner (EMA)~\citep{li2020mopro}:
 \begin{equation}
 \label{eq:update}
     {\boldsymbol{\mu}}_c := \text{Normalize}( \alpha{\boldsymbol{\mu}}_c + (1-\alpha)\*z), 
     \; \forall c\in\{1, 2, \ldots, C\}
 \end{equation}
 where the prototype $\boldsymbol{\mu}_{c}$ for class $c$ is updated during training as the moving average of all embeddings with label $c$, and $\*z$ denotes the normalized embedding of samples of class $c$. {An end-to-end pseudo algorithm is summarized in Appendix~\ref{alg}}.

\paragraph{Class prediction.} In testing, classification is conducted by identifying the closest class prototype: { $\hat{y}= \argmax_{c \in [C]} f_c(\*x)$, where $f_c(\*x) =\*z^\top\boldsymbol{\mu}_c$} and $\*z = {h(\*x)\over \|h(\*x)\|_2 } $ is the normalized feature embedding.

\subsection{Geometrical Interpretation of Loss and Embedding}
\label{sec:geo}

Geometrically, the loss function above can be interpreted as learning embeddings located on the surface of a unit hypersphere. The hyperspherical embeddings can be modeled by the von Mises-Fisher (vMF) distribution, a well-known distribution in directional statistics~\citep{jupp2009directional}. For a unit vector $\z \in \Rbb^d$ in class $c$, the probability density function is defined as
\begin{equation} \label{eq:conditional}
    p(\z \mid y=c) = Z_d(\kappa)\exp(\kappa\boldsymbol{\mu}_{c}^\top \z),
\end{equation}

where $\boldsymbol{\mu}_{c} \in \mathbb{R}^d$ denotes the mean direction of the class $c$, $\kappa \geq 0$ denotes the concentration of the distribution around $\boldsymbol{\mu}_{c}$, and $Z_d(\kappa)$ denotes the normalization factor. A larger $\kappa$ indicates a higher concentration around the class center.  In the extreme case of $\kappa=0$, the samples are distributed
uniformly on the hypersphere.

\begin{wrapfigure}[11]{r}{0.42\textwidth}
    %\vspace{-1.0cm}
    \centering    \includegraphics[width=0.38\textwidth]{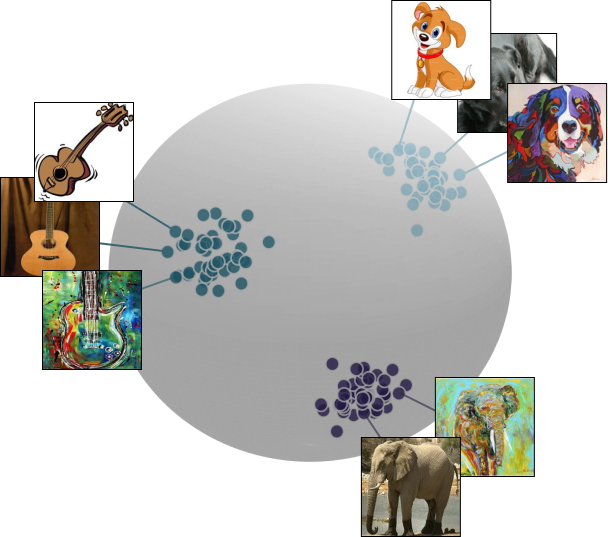}
    \vspace{-0.1cm}
    \caption{\small Illustration of hyperspherical embeddings. Images are from PACS~\citep{li2017deeper}.}
    \label{fig:hyper}
\end{wrapfigure}

Under this probabilistic model, an embedding $\z$ is assigned to the class $c$ with the following probability
\begin{align} \label{eq:probability vMF}
    p(y=c \mid \z; \{\kappa, \boldsymbol{\mu}_{j}\}_{j=1}^C)
        &= \frac{Z_d(\kappa) \exp(\kappa \boldsymbol{\mu}_{c}^\top \mathbf{z})}{\sum_{j=1}^C Z_d(\kappa) \exp(\kappa \boldsymbol{\mu}_{j}^\top \mathbf{z})} \nonumber  \\
        &= \frac{\exp(\boldsymbol{\mu}_{c}^\top \mathbf{z} / \tau)}{\sum_{j=1}^C \exp(\boldsymbol{\mu}_{j}^\top \mathbf{z} / \tau)},
\end{align}
where $\tau = 1 / \kappa$ denotes a temperature parameter. 

\noindent\textbf{Maximum likelihood view.} {Notably, minimizing the first term in our loss ({\emph{cf.}} Eq.~\ref{eq:loss}) is equivalent to performing maximum likelihood estimation under the vMF distribution}:
 \begin{equation*} 
    \text{argmax}_\theta \prod_{i=1}^N p(y_i \mid \x_i;   \{\kappa, \boldsymbol{\mu}_{j}\}_{j=1}^C), \text{where}~(\x_i, y_i) \in {\bigcup\limits_{e \in \mathcal{E}_\text{train}} \mathcal{D}^e}
\end{equation*}
where $i$ is the index of sample, $j$ is the index of the class, and $N$ is the size of the training set. In effect, this loss encourages each ID sample to have a high probability assigned to the correct class in the mixtures of the vMF distributions.

\vspace{-0.3cm}
\section{Experiments}
\label{sec:experiment}

In this section, we show that HYPO achieves strong OOD generalization performance in practice, establishing competitive performance on several benchmarks. In what follows, we describe the experimental setup in Section~\ref{exp:setup}, followed by main results and analysis in Section~\ref{exp:results}.

\subsection{Experimental Setup}\label{exp:setup}

\paragraph{Datasets.} Following the common benchmarks in literature, we use CIFAR-10~\citep{krizhevsky2009learning} as the in-distribution data. We use CIFAR-10-C~\citep{hendrycks2018benchmarking} as OOD data, with 19 different common corruption applied to CIFAR-10. 
In addition to CIFAR-10, we conduct experiments on popular benchmarks including PACS~\citep{li2017deeper}, Office-Home~\citep{gulrajani2020search}, and VLCS~\citep{gulrajani2020search} to validate the generalization performance. PACS contains 4 domains/environments (\texttt{photo, art painting, cartoon, sketch}) with 7 classes (\texttt{dog, elephant, giraffe, guitar, horse, house, person}). Office-Home comprises four different domains: art, clipart, product, and real. 
Results on additional OOD datasets Terra Incognita~\citep{gulrajani2020search}, and ImageNet can be found in Appendix~\ref{sec:otherood} and Appendix~\ref{sec:imagenet}. 

\paragraph{Evaluation metrics.} We report the following two metrics: \textbf{(1)} ID classification accuracy (ID Acc.) for ID generalization, and \textbf{(2)} OOD classification accuracy (OOD Acc.) for OOD generalization.

\begin{table*}[t]
\centering
\scalebox{0.8}{\begin{tabular}{lcccc}
\toprule
\textbf{Algorithm}  & \textbf{PACS} & \textbf{Office-Home} & \textbf{VLCS} & \textbf{Average Acc. (\%)}\\
\midrule
\textbf{ERM}~\citep{vapnik1999overview} & 85.5 & 67.6 & 77.5 & 76.7 \\
\textbf{CORAL}~\citep{sun2016deep}   & 86.2 & 68.7 & 78.8 & 77.9 \\
\textbf{DANN}~\citep{ganin2016domain}   & 83.7 & 65.9 & 78.6 & 76.1  \\
\textbf{MLDG}~\citep{li2018learning} & 84.9 & 66.8 & 77.2 & 76.3 \\
\textbf{CDANN}~\citep{li2018deep}  & 82.6 & 65.7 & 77.5 & 75.3 \\
\textbf{MMD}~\citep{li2018domain}   & 84.7 & 66.4 & 77.5 & 76.2 \\
\textbf{IRM}~\citep{arjovsky2019invariant}  & 83.5 & 64.3 & 78.6 & 75.5\\
\textbf{GroupDRO}~\citep{sagawa2020distributionally}   & 84.4 & 66.0 & 76.7 & 75.7 \\
\textbf{I-Mixup}~\citep{wang2020heterogeneous,xu2020adversarial,yan2020improve}  & 84.6 & 68.1 & 77.4 & 76.7 \\
\textbf{RSC}~\citep{huang2020self}  & 85.2 & 65.5 & 77.1 & 75.9 \\
\textbf{ARM}~\citep{zhang2020adaptive}  & 85.1 & 64.8 & 77.6 & 75.8 \\
\textbf{MTL}~\citep{blanchard2021domain}  & 84.6 & 66.4 & 77.2 & 76.1 \\
\textbf{VREx}~\citep{krueger2021out}  & 84.9 & 66.4 & 78.3 & 76.5 \\
\textbf{Mixstyle}~\citep{zhou2021domain}   & 85.2 & 60.4 & 77.9 & 74.5 \\
\textbf{SelfReg}~\citep{kim2021selfreg}  & 85.6 & 67.9 & 77.8 & 77.1 \\
\textbf{SagNet}~\citep{nam2021reducing} & 86.3 & 68.1 & 77.8 & 77.4  \\
\textbf{GVRT}~\citep{min2022grounding} & 85.1 & 70.1 & 79.0 & 78.1 \\
\textbf{VNE}~\citep{kim2023vne} & 86.9 & 65.9 & 78.1 & 77.0 \\
\textbf{HYPO (Ours)}  &  88.0$_{\pm 0.4}$  & 71.7$_{\pm 0.3}$ &  78.2$_{\pm 0.4}$ & \textbf{79.3} \\
\bottomrule
\end{tabular}}         
\vspace{-0.2cm}
\caption[]{\small {\color{black} Comparison with domain generalization methods on the PACS, Office-Home, and VLCS. All methods are trained on ResNet-50. The model selection is based on a training domain validation set. {To isolate the effect of loss functions, all methods are optimized using standard SGD}. 
We report the average and std of our method. $\pm x$ denotes the rounded standard error. }
}
\label{tab:all}
\end{table*}

\vspace{-0.1cm}
\paragraph{Experimental details.} In our main experiments, we use ResNet-18 for CIFAR-10 and ResNet-50 for PACS, Office-Home, and VLCS. 
For these datasets, we use stochastic gradient descent with momentum 0.9, and weight decay $10^{-4}$.
For {CIFAR-10}, we train the model from scratch for 500 epochs using an initial learning rate of 0.5 and cosine scheduling, with a batch size of 512. Following common practice for contrastive losses~\citep{2020simclr,2020supcon,yao2022pcl}, we use an MLP projection head with one hidden layer to obtain features. The embedding (output) dimension is 128 for the projection head. We set the default temperature $\tau$ as 0.1 and the prototype update factor $\alpha$ as 0.95. 
For PACS, Office-Home, and VLCS, we follow the common practice and initialize the network using ImageNet pre-trained weights. We fine-tune the network for 50 epochs.
The embedding dimension is 512 for the projection head. We adopt the leave-one-domain-out evaluation protocol and use the training domain validation set for model selection~\citep{gulrajani2020search}, where the validation set is pooled from all training domains. Details on other hyperparameters are in Appendix~\ref{sec:expdetails}.

\vspace{-0.2cm}
\subsection{Main Results and Analysis}\label{exp:results}

\vspace{-0.1cm}
\textbf{HYPO excels on common corruption benchmarks.}
As shown in Figure~\ref{fig:c10_corr}, HYPO achieves {consistent improvement over the ERM baseline (trained with cross-entropy loss), on a variety of common corruptions}. Our evaluation includes different corruptions including Gaussian noise, Snow, JPEG compression, Shot noise, Zoom blur, etc.
The model is trained on CIFAR-10, without seeing any type of corruption data. In particular, our method  brings significant improvement for challenging cases such as Gaussian noise, enhancing OOD accuracy from $78.09\%$ to $85.21\%$ ($+\textbf{7.12}\%$). Complete results on all 19 different corruption types are in Appendix~\ref{sec:c10_shift}.

\begin{figure*}[h]
%\vspace{-0.6in}
\begin{center}
\includegraphics[width=0.9\linewidth]{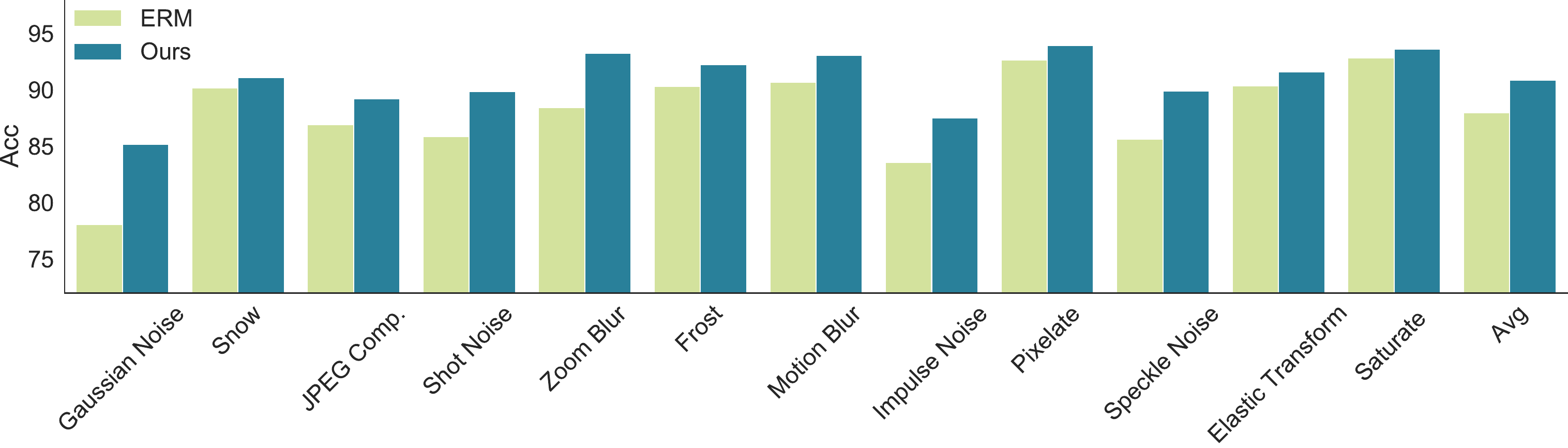}
\end{center}
\vspace{-0.2in}
\caption{\small Our method HYPO significantly improves the OOD generalization performance compared to ERM  on various OOD datasets w.r.t. CIFAR-10 (ID). Full results can be seen in Appendix~\ref{sec:c10_shift}.}
\label{fig:c10_corr}
% \vspace{-0.0in}
\end{figure*}

\noindent\textbf{HYPO establishes competitive performance on popular benchmarks.}
Our method delivers superior results in the popular domain generalization tasks, as shown in Table~\ref{tab:all}. HYPO outperforms an extensive collection of common OOD generalization baselines on popular domain generalization datasets, including PACS, Office-Home, VLCS.
For instance, on PACS, HYPO improves the best loss-based method by $\textbf{1.1}\%$. Notably, this enhancement is non-trivial since we are not relying on specialized optimization algorithms such as SWAD~\citep{cha2021swad}. Later in our ablation, we show that coupling HYPO with SWAD can further boost the OOD generalization performance, establishing superior performance on this challenging task.

With multiple training domains, we observe that it is desirable to emphasize hard negative pairs when optimizing the inter-class separation. As depicted in {Figure}~\ref{fig:hard_neg}, the embeddings of negative pairs from the same domain but different classes (such as dog and elephant in art painting) can be quite close on the hypersphere. Therefore, it is more informative to separate such hard negative pairs. This can be enforced by a simple modification to the denominator of our variation loss (Eq.~\ref{eq:hard_neg} in Appendix~\ref{sec:expdetails}), {which we adopt for multi-source domain generalization tasks}.

\begin{wrapfigure}[16]{r}{0.35\textwidth}
    \vspace{-0.1in}
    \centering
    \includegraphics[width=0.35\textwidth]{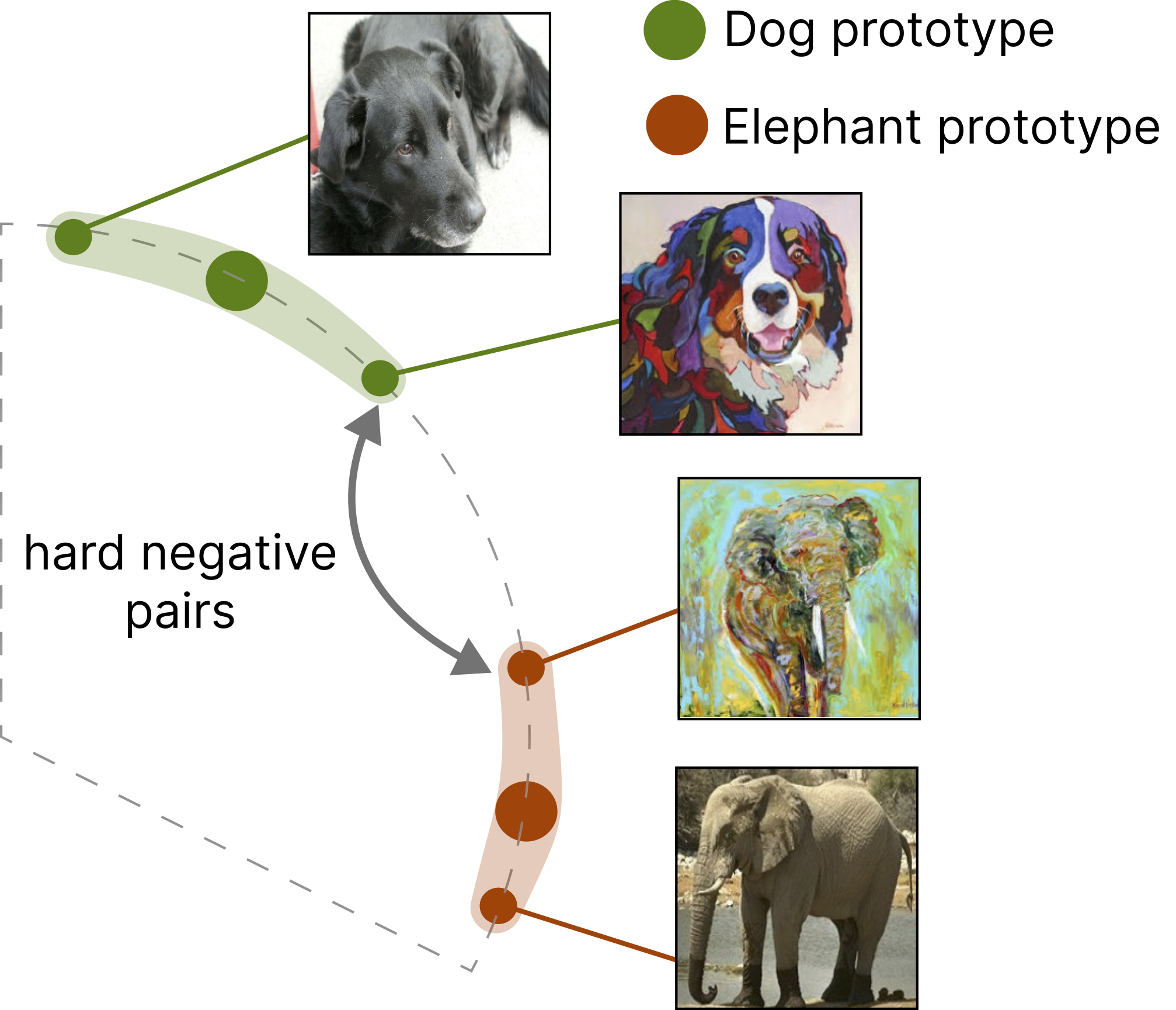}
    \caption{\small Illustration of hard negative pairs which share the same domain (art painting) but have different class labels.}
    \label{fig:hard_neg}
\end{wrapfigure}

\paragraph{Relations to PCL.} PCL~\citep{yao2022pcl} adapts a proxy-based contrastive learning framework for domain generalization. We highlight several notable distinctions from ours: 
\textbf{(1)} While PCL offers no theoretical insights, HYPO is guided by theory. We provide a formal theoretical justification that our method reduces intra-class variation which is essential to bounding OOD generalization error (see Section~\ref{sec:theory}); 
 \textbf{(2)} \textcolor{black}{Our loss function formulation is different} and can be rigorously interpreted as shaping vMF distributions of hyperspherical embeddings (see Section~\ref{sec:geo}), whereas PCL can not; \textbf{(3)} Unlike PCL (86.3\% w/o SWAD), HYPO is able to achieve competitive performance (88.0\%) without heavy reliance on special optimization SWAD~\citep{cha2021swad}, a dense and overfit-aware stochastic weight sampling~\citep{izmailov2018averaging} strategy for OOD generalization. As shown in Table~\ref{tab:pcl-swad}, we also conduct experiments in conjunction
with SWAD. Compared to PCL, HYPO achieves superior performance with \textbf{89}\% accuracy, which further demonstrates its advantage.

\begin{table*}[t]
%\vspace{-0.1cm}
\centering
\scalebox{0.82}{\begin{tabular}{lccccc}
\toprule
\textbf{Algorithm}  & \textbf{Art painting} & \textbf{Cartoon} & \textbf{Photo} & \textbf{Sketch} & \textbf{Average Acc. (\%)}\\
\midrule
\textbf{PCL w/ SGD}~\citep{yao2022pcl} & 88.0 & 78.8 & 98.1 & 80.3 & 86.3 \\
\textbf{HYPO w/ SGD (Ours)} &    87.2  &  82.3   &   98.0  & 84.5   &  \textbf{88.0}  \\
\midrule
\textbf{PCL w/ SWAD}~\citep{yao2022pcl} & 90.2 & 83.9 & 98.1 & 82.6 & 88.7 \\
\textbf{HYPO w/ SWAD (Ours)} & 90.5 & 84.6 & 97.7 & 83.2 & \textbf{89.0} \\
\bottomrule
\end{tabular}}         
\vspace{-0.15cm}
\caption[]{\small \color{black} Results for comparing PCL and HYPO with SGD-based and SWAD-based optimizations on the PACS benchmark. (*The performance reported in the original PCL paper Table 3 is implicitly based on SWAD).
}
\vspace{-0.3cm}
\label{tab:pcl-swad}
\end{table*}

\paragraph{Visualization of embedding.}
Figure~\ref{fig:umap} shows the UMAP~\citep{mcinnes2018umap-software} visualization of feature embeddings for ERM (left) vs. HYPO (right). The embeddings are extracted from models trained on PACS. The \textcolor{f_red}{red}, \textcolor{f_orange}{orange}, and \textcolor{f_green}{green} points are from the in-distribution, corresponding to art painting (A), photo (P), and sketch (S) domains. The \textcolor{f_violet}{violet} points are from the unseen OOD domain cartoon (C). There are two salient observations: \textbf{(1)} for any given class, the embeddings across domains $\mathcal{E}_\text{all}$ become significantly more aligned (and invariant) using our method compared to the ERM baseline. This directly verifies the low variation ({\emph{cf.}} Equation~\ref{eq:variation}) of our learned embedding. \textbf{(2)} The embeddings are well separated across different classes, and distributed more uniformly in the space than ERM, which verifies the high inter-class separation ({\emph{cf.}} Equation~\ref{eq:separation}) of our method. Overall, our observations well support the efficacy of HYPO.

\begin{figure*}[h]
\centering
     \begin{subfigure}{0.36\textwidth}
         \centering
         \includegraphics[width=\textwidth]{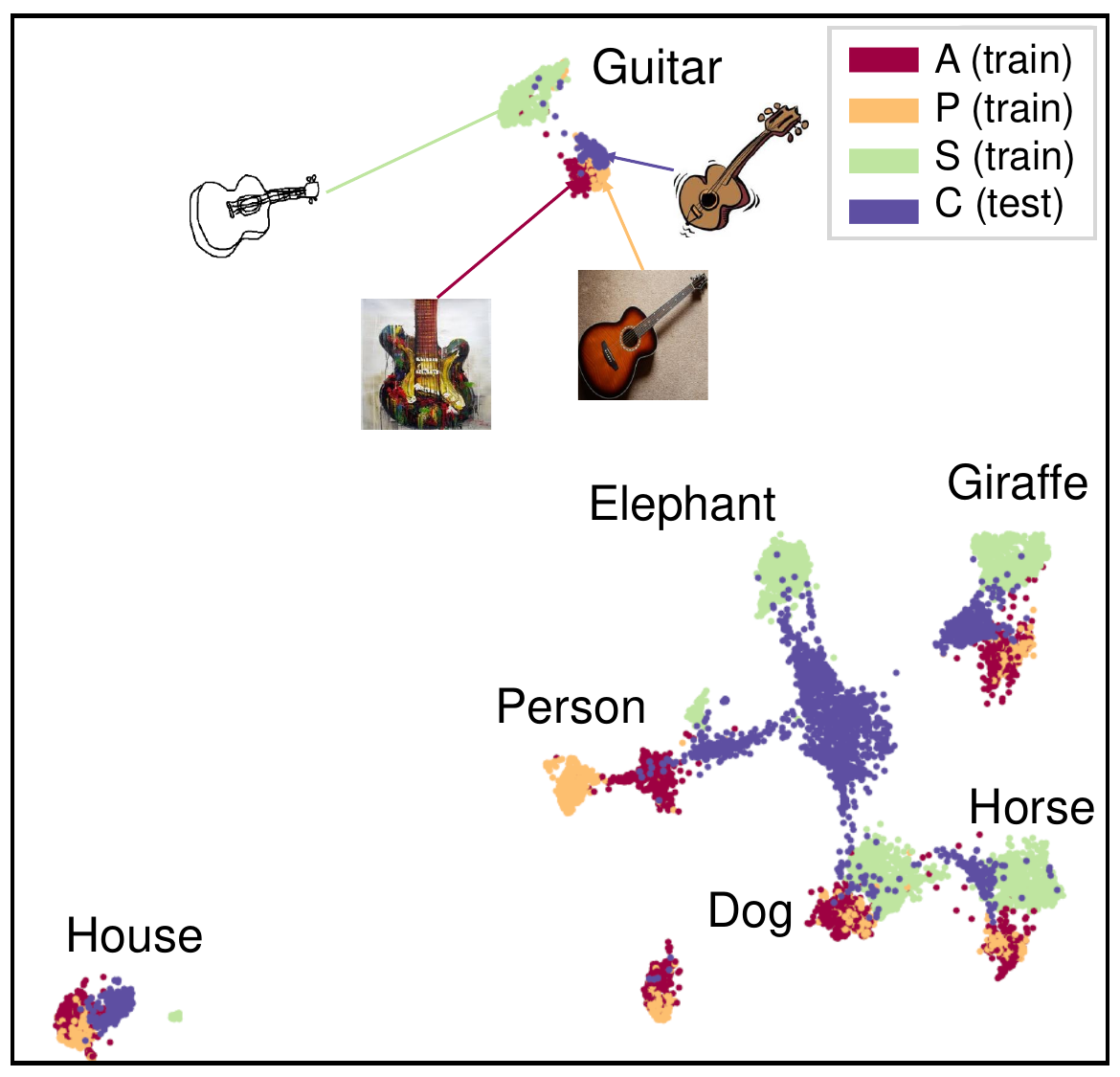}
         \caption{ERM (\textbf{high variation})}
         \label{fig:erm}
     \end{subfigure}
     \hspace{9mm}
     \begin{subfigure}{0.36\textwidth}
         \centering
         \includegraphics[width=\textwidth]{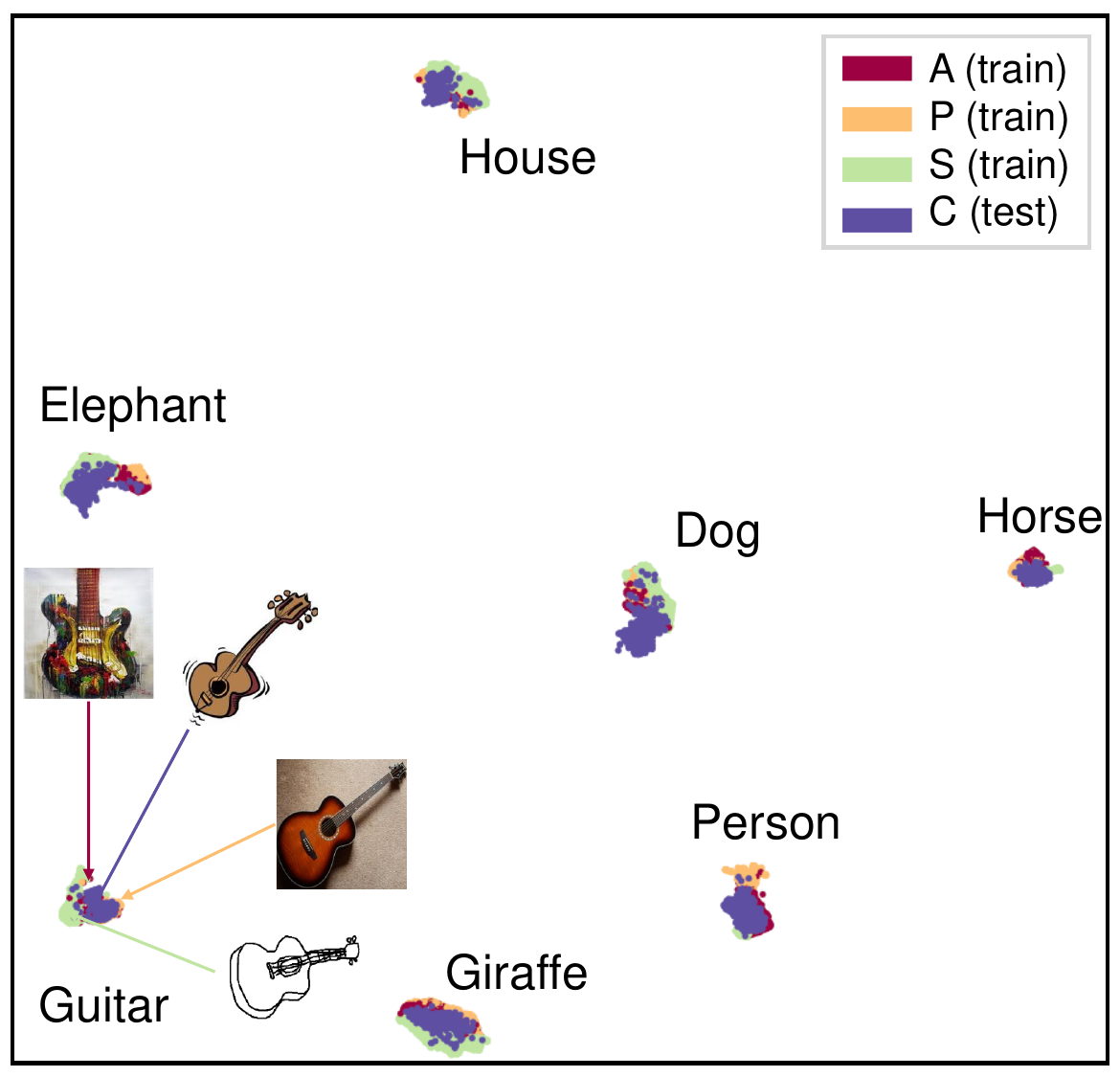}
         \caption{HYPO (\textbf{low variation})}
         \label{fig:ours}
     \end{subfigure}
\vspace{-0.2cm}
\caption[]{\small UMAP~\citep{mcinnes2018umap-software} visualization of the features when the model is trained with \texttt{CE} vs. \texttt{HYPO} for PACS. The \textcolor{f_red}{red}, \textcolor{f_orange}{orange}, and \textcolor{f_green}{green} points are from the in-distribution, which denote art painting (A), photo (P), and sketch (S). The \textcolor{f_violet}{violet} points are from the unseen OOD domain cartoon (C).}
\vspace{-0.1cm}
\label{fig:umap}
\end{figure*}

\vspace{-0.2cm}
\paragraph{Quantitative verification of intra-class variation.}
We provide empirical verification on intra-class variation in Figure~\ref{fig:dist_comp}, where the model is trained on PACS. We measure the intra-class \emph{variation} with {Sinkhorn divergence} (entropy regularized Wasserstein distance). The horizontal axis (0)-(6) denotes different classes, and the vertical axis denotes different pairs of training domains (`P', `A', `S'). Darker color indicates lower Sinkhorn divergence. We can see that our method results in significantly lower intra-class variation compared to ERM, which aligns with our theoretical insights in Section~\ref{sec:theory}.

\begin{figure*}[t]
\vspace{-0.1cm}
\centering
\includegraphics[width=0.89\textwidth]{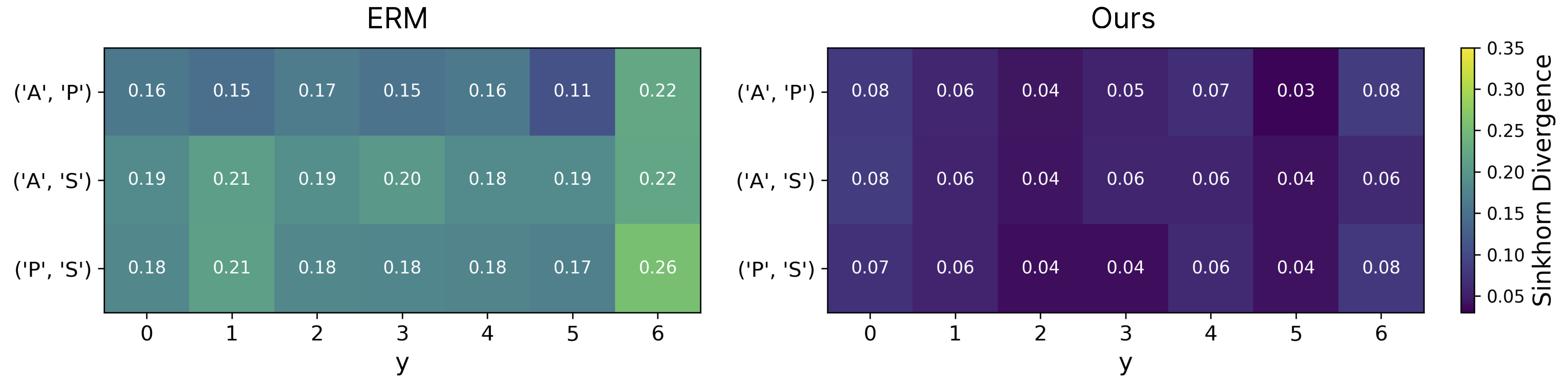}
\vspace{-0.3cm}
\caption[]{\small Intra-class variation for ERM  (left) vs. HYPO (right) on PACS. For each class $y$, we measure the Sinkhorn Divergence between the embeddings of each pair of domains. Our method results in significantly lower intra-class variation across different pairs of training domains compared to ERM.}
 \vspace{-0.1cm}
\label{fig:dist_comp}
\end{figure*}

\iffalse
\begin{table*}[ht]
\centering
\scalebox{0.85}{\begin{tabular}{lccccc}
\toprule
\textbf{Algorithm}  & \textbf{Art painting} & \textbf{Cartoon} & \textbf{Photo} & \textbf{Sketch} & \textbf{Average Acc. (\%)}\\
\midrule
\textbf{SWAD}~\cite{cha2021swad} & 89.3  & 83.4 & 97.3 & 82.5 & 88.1 \\
\textbf{Ours+SWAD} & 90.5 & 84.6 & 97.7 & 83.2 & \textbf{89.0} \\
\bottomrule
\end{tabular}}         
\vspace{-0.15cm}
\caption[]{\small Results with SWAD-based optimization on the PACS benchmark. 
}
\vspace{-0.5cm}
\label{tab:pacs-swad}
\end{table*}
\fi

\vspace{-0.2cm}
\paragraph{Additional ablation studies.}
\emph{Due to space constraints, we defer additional experiments and ablations to the Appendix, including \textbf{(1)} results on other tasks from DomainBed (Appendix~\ref{sec:otherood}); \textbf{(2)} results on large-scale benchmarks such as ImageNet-100 (Appendix~\ref{sec:imagenet}); \textbf{(3)} ablation of different loss terms (Appendix~\ref{sec:ablationloss}); \textbf{(4)} an analysis on the effect of $\tau$ and $\alpha$ (Appendix~\ref{sec:hyperparameter})}.

\section{Why HYPO Improves Out-of-Distribution Generalization?}
\label{sec:theory}
\vspace{-0.2cm}
In this section, we provide a formal justification of the loss function. 
Our main Theorem~\ref{thm:main_theorem} gives a provable understanding of how the learning objective effectively reduces the variation estimate $\Vcal^{\textnormal{sup}}(h, \mathcal{E}_\textnormal{avail})$, thus directly reducing the OOD generalization error according to Theorem~\ref{general ood bound}. 
For simplicity,  we assume $\tau = 1$ and denote the prototype vectors $\boldsymbol{\mu}_1, \ldots, \boldsymbol{\mu}_C \in \mathcal{S}^{d-1}$. Let $\cH \subset \{h: \Xcal \mapsto \mathcal{S}^{d-1} \} $ denote the function class induced by the neural network. 
\begin{theorem}[Variation upper bound using HYPO]\label{thm:main_theorem}
When samples are aligned with class prototypes such that $\frac{1}{N} \sum_{j=1}^N \boldsymbol{\mu}_{c(j)}^\top \*z_j \geq 1-\epsilon$ for some $\epsilon\in (0,1)$, then $\exists\delta\in (0,1)$, with probability at least $1-\delta$, 
\begin{equation*}
    \Vcal^{\textnormal{sup}}(h, \mathcal{E}_\textnormal{avail}) \leq O( {\epsilon^{1/3}} + {(\frac{\ln(2/\delta)}{N})^{1/6}} 
    + {(\bbE_{\mathcal{D}} [\frac{1}{N} \bbE_{\sigma_1, \ldots, \sigma_N}\sup_{h \in \cH } \sum_{i=1}^N \sigma_i \*z_i^\top \boldsymbol{\mu}_{c(i)}])^{1/3}} ),
    \end{equation*}

where $\*z_j = {h(\*x_j)\over \|h(\*x_j)\|_2 } $, $\sigma_1, \ldots, \sigma_N$ are Rademacher random variables and $O(\cdot)$ suppresses dependence on constants and $|\Eava|$. 
\end{theorem}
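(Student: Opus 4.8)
The plan is to bound the variation $\Vcal^{\textnormal{sup}}(h,\Eava)$ by relating it, via the triangle inequality, to the discrepancy between each class-conditional feature distribution and a point mass at the corresponding prototype. Concretely, for any unit vector $\beta\in\mathcal S^{d-1}$, any class $y$, and any two environments $e,e'$, the quantity $\rho\big(\Pbb(\beta^\top\*z^e\mid y),\Pbb(\beta^\top\*z^{e'}\mid y)\big)$ is upper bounded by $\rho\big(\Pbb(\beta^\top\*z^e\mid y),\delta_{\beta^\top\muvec_y}\big)+\rho\big(\delta_{\beta^\top\muvec_y},\Pbb(\beta^\top\*z^{e'}\mid y)\big)$, where $\delta_{\beta^\top\muvec_y}$ is the Dirac mass at the scalar $\beta^\top\muvec_y$. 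So it suffices to control, uniformly over $\beta$ and $y$, the ``radius'' $\sup_\beta\rho\big(\Pbb(\beta^\top\*z^e\mid y),\delta_{\beta^\top\muvec_y}\big)$. For a concrete choice of $\rho$ (say $W_1$ or total variation controlled through a moment), this radius is governed by $\Ebb[\,\|\*z^e - \muvec_{y}\|^2\mid y\,]$ or $\Ebb[\,(1-\muvec_y^\top\*z^e)\mid y\,]$, since $1-\muvec_y^\top\*z = \tfrac12\|\*z-\muvec_y\|^2$ for unit vectors. Thus the whole problem reduces to bounding the population alignment deficit $\Ebb_{(\*x,y)\sim\Pbb^e}[\,1-\muvec_y^\top\*z\,]$ for each $e\in\Eava$ in terms of (i) the empirical alignment deficit $\epsilon$, (ii) a concentration term, and (iii) a Rademacher complexity term.

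The second step is the generalization bound linking the empirical average $\tfrac1N\sum_j\muvec_{c(j)}^\top\*z_j\ge 1-\epsilon$ to its population counterpart per environment. I would apply a standard uniform-convergence / symmetrization argument to the function class $\{\*x\mapsto \muvec_{c(\*x)}^\top h(\*x): h\in\cH\}$ (which is bounded in $[-1,1]$), yielding with probability $\ge 1-\delta$ that for every environment $e$, $\Ebb_{\Pbb^e}[\,1-\muvec_y^\top\*z\,] \le \epsilon + O\!\big(\sqrt{\ln(1/\delta)/N}\big) + O\!\big(\mathfrak R_N(\cH)\big)$, where $\mathfrak R_N(\cH) = \Ebb_{\mathcal D}\big[\tfrac1N\Ebb_\sigma\sup_{h\in\cH}\sum_i\sigma_i\*z_i^\top\muvec_{c(i)}\big]$ is exactly the Rademacher term appearing in the statement. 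Passing from the pooled empirical average to a per-environment bound costs a factor depending on $|\Eava|$, which is why $O(\cdot)$ is allowed to absorb $|\Eava|$; one also uses that the minimum available-environment alignment is at least the pooled average minus the spread, or simply re-runs the uniform bound per environment. Then Markov's inequality converts the bound on $\Ebb[1-\muvec_y^\top\*z]$ into a bound on the probability that $1-\muvec_y^\top\*z$ exceeds a threshold $t$, and optimizing $t$ against the mass-$t$ tail gives the $W_1$-type distance from $\Pbb(\beta^\top\*z\mid y)$ to $\delta_{\beta^\top\muvec_y}$ scaling like the cube root of the mean deficit — this is the source of the exponents $1/3$ and $1/6$ in the statement (the $1/6$ being the cube root of the $\sqrt{\cdot}$ concentration term).

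The third step is just bookkeeping: take the supremum over $\beta\in\mathcal S^{d-1}$ (harmless, because $|\beta^\top\*z - \beta^\top\muvec_y| \le \|\*z-\muvec_y\|$ uniformly in $\beta$, so the one-dimensional projected deficits are all controlled by the same ambient-space deficit), take the max over the finitely many classes $y$, and the sup over the pair $e,e'\in\Eava$, collecting the constants and the $|\Eava|$ dependence into $O(\cdot)$. Combining the triangle-inequality reduction with the cube-root-of-deficit estimate and the uniform-convergence decomposition of the deficit yields precisely $\Vcal^{\textnormal{sup}}(h,\Eava)\le O\big(\epsilon^{1/3} + (\ln(2/\delta)/N)^{1/6} + (\mathfrak R_N(\cH))^{1/3}\big)$.

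I expect the main obstacle to be the first step done rigorously: turning the mean alignment deficit $\Ebb[1-\muvec_y^\top\*z\mid y]$ into a genuine bound on a distribution distance $\rho\big(\Pbb(\beta^\top\*z\mid y),\delta_{\beta^\top\muvec_y}\big)$ with the right exponent and with $\rho$ being one of the metrics (Wasserstein / TV / Hellinger) for which Theorem~\ref{general ood bound} applies. For $W_1$ this is a clean Markov-plus-truncation argument giving the cube root; for total variation or Hellinger one needs an extra smoothing/quantization step (a point mass has TV distance $1$ from any nonatomic law), so the cleanest route is to prove it for $W_1$ and invoke the fact that the OOD bound of Ye et al.\ can be instantiated with the Wasserstein metric, or alternatively to compare the feature law to a small vMF/Gaussian bump around $\muvec_y$ rather than to a Dirac mass. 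Getting this measure-theoretic reduction precise — and making sure the per-environment passage only costs a multiplicative $|\Eava|$ — is the technical heart; the symmetrization and Markov steps are routine.
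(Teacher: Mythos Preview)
Your proposal is correct and follows essentially the same route as the paper: the paper fixes $\rho=\cW_1$, uses a Rademacher bound to pass from the empirical alignment $\geq 1-\epsilon$ to a population bound (your Step~2), converts the pooled bound to a per-$(e,y)$ bound at the cost of a $C\cdot|\Eava|$ factor, applies Markov/Chebyshev to control $\Pbb(\|\*z-\muvec_y\|\geq\eta)\leq O(\gamma/\eta^2)$, and then does exactly your ``Markov-plus-truncation'' optimization at $\eta=\gamma^{1/3}$ to bound $\cW_1$ between two environment laws---their centering trick $f_0(\muvec_y^\top\*v)=0$ in the Kantorovich dual is just your triangle-through-$\delta_{\beta^\top\muvec_y}$ in disguise. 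Your anticipated obstacle (which $\rho$ to use) is resolved precisely as you guessed: the paper commits to Wasserstein-1 from the outset.
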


{\textbf{Implications.} In Theorem~\ref{thm:main_theorem}, we can see that the upper bound consists of three factors: the optimization error, the Rademacher complexity of the given neural network, and the estimation error which becomes close to 0 as the number of samples $N$ increases. Importantly, the term $\epsilon$ reflects how sample embeddings are aligned with their class prototypes on the hyperspherical space (as we have $\frac{1}{N} \sum_{j=1}^N \boldsymbol{\mu}_{c(j)}^{\top} \mathbf{z}_j \geq 1-\epsilon$), \emph{which is directly minimized by our proposed loss in Equation~\ref{eq:loss}}. 
The above Theorem implies that when we train the model with the HYPO loss, we can effectively upper bound the intra-class variation, a key term for bounding OOD generation performance by Theorem~\ref{general ood bound}.  
In Section~\ref{sec:ablationloss}, {we provide empirical verification of our bound by estimating $\hat{\epsilon}$, which is indeed close to 0 for models trained with HYPO loss.} 
We defer proof details to Appendix~\ref{sec:proof}.

\paragraph{Necessity of inter-class separation loss.} {We further present a theoretical analysis in Appendix~\ref{sec:inter-class sep} explaining how our loss promotes inter-class separation, which is necessary to ensure the learnability of the OOD generalization problem.} We provide a brief summary in Appendix~\ref{sec:proof} and discuss the notion of OOD learnability, and would like to refer readers to \cite{ye2021towards} for an in-depth and formal treatment. Empirically, to verify the impact of inter-class separation, we conducted an ablation study in Appendix~\ref{sec:ablationloss}, where we compare the OOD performance of our method (with separation loss) vs. our method (without separation loss). We observe that incorporating separation loss indeed achieves stronger OOD generalization performance, echoing the theory.

\vspace{-0.2cm}
\section{Related Works}

\vspace{-0.2cm}
\paragraph{Out-of-distribution generalization.} OOD generalization is an important problem when the training and test data are sampled from different distributions. Compared to domain adaptation~\citep{daume2006domain, ben2010theory, tzeng2017adversarial, kang2019contrastive, wang2022embracing}, OOD generalization is more challenging~\citep{blanchard2011generalizing, muandet2013domain, gulrajani2020search, bai2021ood, zhou2021domain, koh2021wilds, bai2021decaug, wang2022generalizing,  ye2022ood, cha2022domain, bai2023improving, kim2023vne,guo2023out, dai2023moderately, tong2023distribution}, which aims to generalize to unseen distributions without any sample from the target domain. In particular, A popular direction is to extract domain-invariant feature representation. Prior works show that the invariant features from training domains can help discover invariance on target domains for linear models~\citep{peters2016causal, rojas2018invariant}. IRM~\citep{arjovsky2019invariant} and its variants~\citep{ahuja2020invariant,krueger2021out} aim to find invariant representation from different training domains via an invariant risk regularizer.
{\color{black} \cite{mahajan2021domain} propose a causal matching-based algorithm for domain generalization.}
Other lines of works have explored the problem from various perspectives such as causal discovery~\citep{chang2020invariant}, distributional robustness~\citep{sagawa2020distributionally, zhou2020learning}, model ensembles~\citep{chen2023explore, rame2023model}, and test-time adaptation~\citep{park2023test, chen2023improved}. In this paper, we focus on improving OOD generalization via \textcolor{black}{hyperspherical} learning, and provide a new theoretical analysis of the generalization error.

\vspace{-0.2cm}
\paragraph{Theory for OOD generalization.} Although the problem has attracted great interest, theoretical
understanding of desirable conditions for OOD generalization is under-explored. Generalization to arbitrary OOD is impossible since the test distribution is unknown~\citep{blanchard2011generalizing, muandet2013domain}. Numerous general distance measures exist for defining a set of test domains around the training domain, such as KL divergence~\citep{joyce2011kullback}, MMD~\citep{gretton2006kernel}, and EMD~\citep{rubner1998metric}. Based on these measures, some prior works focus on analyzing the OOD generalization error bound. For instance, \citet{albuquerque2019generalizing} obtain a risk bound for linear combinations of training domains.
~\citet{ye2021towards} provide OOD generalization error bounds based on the notation of variation. 
In this work, we provide a hyperspherical learning algorithm that provably reduces the variation, thereby improving OOD generalization both theoretically and empirically. 

\vspace{-0.2cm}
{\color{black} 
\paragraph{Contrastive learning for domain generalization} 
Contrastive learning methods have been widely explored in different learning tasks. For example,
~\citet{wang2020understanding} analyze the relation between the alignment and uniformity properties on the hypersphere for unsupervised learning, while we focus on supervised learning with domain shift.
{\color{black}~\citet{tapaswi2019video} investigates a contrastive metric learning approach for hyperspherical embeddings in video face clustering, which differs from our objective of OOD generalization. }
~\citet{von2021self} provide theoretical justification for self-supervised learning with data augmentations. 
Recently, contrastive losses have been adopted for OOD generalization.
For example, CIGA~\citep{chen2022learning} captures the invariance of graphs to enable OOD generalization for graph data. CNC~\citep{zhang2022correct} is specifically designed for learning representations robust to spurious correlation by inferring pseudo-group labels and performing supervised contrastive learning. SelfReg~\citep{kim2021selfreg} proposes a self-supervised contrastive regularization for domain generalization with non-hyperspherical embeddings, while we focus on hyperspherical features with theoretically grounded loss formulations.
}

\vspace{-0.2cm}
\section{Conclusion}

In this paper, we present a theoretically justified algorithm for OOD generalization via hyperspherical learning. HYPO facilitates learning domain-invariant representations in the hyperspherical space. Specifically, we encourage low variation via aligning features across domains for each class and promote high separation by separating prototypes across different classes. Theoretically, we provide a provable understanding of how our loss function reduces the OOD generalization error. Minimizing our learning objective can reduce the variation estimates, which determine the general upper bound on the generalization error of a learnable OOD generalization task. Empirically, HYPO achieves superior performance compared to competitive OOD generalization baselines. We hope our work can inspire future research on OOD generalization and provable understanding. 

\section*{Acknowledgement}
The authors would like to thank
ICLR anonymous reviewers for their helpful feedback. The work is supported by the AFOSR
Young Investigator Program under award number FA9550-23-1-0184, National Science Foundation
(NSF) Award No. IIS-2237037 \& IIS-2331669, and Office of Naval Research under grant number
N00014-23-1-2643.

\bibliography{iclr2024_conference}
\bibliographystyle{iclr2024_conference}

\newpage
\appendix

\section{Pseudo Algorithm} \label{alg} 
The training scheme of HYPO is shown below. We jointly optimize for (1) \emph{low variation}, by encouraging the feature embedding of samples to be close to their class prototypes; and (2) \emph{high separation}, by encouraging different class prototypes to be far apart from each other.

\begin{algorithm}[!h]
{\color{black} 
	\DontPrintSemicolon
	\SetNoFillComment
	\textbf{Input:} Training dataset $\mathcal{D}$, deep neural network encoder $h$, class prototypes $\boldsymbol{\mu}_c$ ($1\le j\le C$), temperature $\tau$\\
	\For {$epoch=1,2,\ldots,$}
	{
	\For {$iter=1,2,\ldots,$}
	{
        sample a mini-batch  $B = \{\*x_i, y_i\}_{i=1}^b$\\
        obtain augmented batch $\tilde{B} = \{\tilde{\*x}_i, \tilde{y}_i\}_{i=1}^{2b}$ by applying two random augmentations to $\*x_i\in B$ $\forall i\in\{1,2,\ldots,b\}$\\
		\For {$\tilde{\*x}_i \in \tilde{B}$}
		{       
		    \tcp{obtain normalized embedding}
          $\tilde{\*z}_i = h(\tilde{\*x}_i)$, $\*z_i = \tilde{\*z}_i / \lVert \tilde{\*z}_i\rVert_2$\\
            \tcp{update class-prototypes}
            ${\boldsymbol{\mu}}_c := \text{Normalize}( \alpha{\boldsymbol{\mu}}_c + (1-\alpha)\*z_i), 
     \; \forall c\in\{1, 2, \ldots, C\} $\\
		}
		\tcp{calculate the loss for low variation}
		      $\mathcal{L}_{\mathrm{var}} =- \frac{1}{N} \sum_{e \in \mathcal{E}_\text{avail}} \sum_{i=1}^{|\mathcal{D}^e|} \log \frac{\exp \left({\*z^e_{i}}^\top  {\boldsymbol{\mu}}_{c(i)} / \tau\right)}{\sum_{j=1}^{C} \exp \left({\*z^e_{i}}^\top  {\boldsymbol{\mu}}_{j} / \tau\right)}$\\
		      \tcp{calculate the loss for high separation}
     $\mathcal{L}_{\mathrm{sep}} =\frac{1}{C} \sum_{i=1}^C \log {1\over C-1}\sum_{\substack{j\neq i, j \in \Ycal}}\exp{\left({\boldsymbol{\mu}}_i^\top {\boldsymbol{\mu}}_j/\tau\right)}$\\
        		\tcp{calculate overall loss}
        $\mathcal{L} =  \mathcal{L}_{\mathrm{var}} +  \mathcal{L}_{\mathrm{sep}}$\\
        \tcp{update the network weights}
        update the weights in the deep neural network \\
	}
	}
\caption{Hyperspherical Out-of-Distribution Generalization}
}
\end{algorithm}

\section{Broader Impacts}

Our work facilitates the theoretical understanding of OOD generalization through prototypical learning, which encourages low variation and high separation in the hyperspherical space. In Section~\ref{exp:results}, we qualitatively and quantitatively verify the low intra-class variation of the learned embeddings and we discuss in Section~\ref{sec:theory} that the variation estimate determines the general upper bound on the generalization error for a learnable OOD generalization task. This provable framework may serve as a foothold that can be useful for future OOD generalization research via representation learning.

From a practical viewpoint, our research can directly impact many real applications, when deploying machine learning models in the real world. Out-of-distribution generalization is a fundamental problem and is commonly encountered when building reliable ML systems in the industry. Our empirical results show that our approach achieves consistent improvement over the baseline on a wide range of tasks. Overall, our work has both theoretical and practical impacts.

\section{Theoretical Analysis}
\label{sec:proof}

\paragraph{Notations.} We first set up notations for theoretical analysis. Recall that $\p_{XY}^e$ denotes the joint distribution of $X,Y$ in domain $e$. The label set $\mathcal{Y} := \{1,2,\cdots, C\}$. For an input $\*x$, $\*z=h(\*x) / \| h(\*x)\|_2$ is its feature embedding. Let $\p^{e,y}_X$ denote the marginal distribution of $X$ in domain $e$ with class $y$. Similarly, $\p^{e,y}_Z$ denotes the marginal distribution of $Z$ in domain $e$ with class $y$. 
Let $E := |\mathcal{E}_\text{train}|$ for abbreviation. As we do not consider the existence of spurious correlation in this work, it is natural to assume that domains and classes are uniformly distributed:
$\p_X := \frac{1}{EC} \sum_{e,y} \p^{e,y}_X$. We specify the distance metric to be the Wasserstein-1 distance \emph{i.e.,} $\cW_1(\cdot, \cdot)$ and define all notions of variation under such distance.

Next, we proceed with several lemmas that are particularly useful to prove our main theorem.

\begin{lemma}\label{lem:rademacher}
With probability at least $1-\delta$,
\begin{align*}
            -\bbE_{(\*x,c)\sim \p_{XY}} \boldsymbol{\mu}_c^\top \frac{h(\*x)}{\norm{h(\*x)}} +  \frac{1}{N}\sum_{i=1}^N {\boldsymbol{\mu}}_{c(i)}^\top\frac{h(\*x_i)}{\norm{h(\*x_i)}}   \leq \bbE_{S \sim \p_N} [\frac{1}{N} \bbE_{\sigma_1, \ldots, \sigma_N} \sup_{h \in \cH } \sum_{i=1}^N \sigma_i \boldsymbol{\mu}_{c(i)}^\top\frac{h(\*x_i)}{\norm{h(\*x_i)}} ] + \beta \sqrt{\frac{\ln(2/\delta)}{N}}.
\end{align*}
where $\beta$ is a universal constant and $\sigma_1, \ldots, \sigma_N$ are Rademacher variables. 
\end{lemma}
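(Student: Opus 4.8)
\textbf{Proof proposal for Lemma~\ref{lem:rademacher}.}

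The plan is to recognize this as a standard one-sided uniform-convergence (generalization) bound and derive it via the classical symmetrization-plus-bounded-differences argument. Define the function class $\mathcal{G} := \{ \*x \mapsto \boldsymbol{\mu}_{c(\*x)}^\top h(\*x)/\norm{h(\*x)} : h \in \cH\}$, where we think of the label $c$ as attached to $\*x$ through the joint sample; each $g \in \mathcal{G}$ takes values in $[-1,1]$ since $\boldsymbol{\mu}_c$ and $h(\*x)/\norm{h(\*x)}$ are both unit vectors and Cauchy--Schwarz applies. The quantity to be controlled is $\sup_{g \in \mathcal{G}}\big( \bbE_{\p_{XY}}[-g] + \frac{1}{N}\sum_{i=1}^N g(\*x_i, c(i)) \big)$, i.e. the largest gap between empirical and population averages over the class.

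First I would invoke McDiarmid's bounded-differences inequality on the random variable $\Phi(S) := \sup_{g\in\mathcal{G}}\big(\bbE[g] - \frac1N\sum_i g_i\big)$: changing one sample $(\*x_i,c(i))$ changes $\Phi$ by at most $2/N$ (range of $g$ divided by $N$, doubled for the worst case), so with probability at least $1-\delta$ we get $\Phi(S) \le \bbE_S[\Phi(S)] + \sqrt{\tfrac{2\ln(1/\delta)}{N}}$; absorbing the constant and using $\ln(2/\delta)$ in place of $\ln(1/\delta)$ is harmless and matches the stated form with a universal constant $\beta$. Second, I would bound $\bbE_S[\Phi(S)]$ by the usual symmetrization trick: introducing a ghost sample $S'$ and Rademacher variables $\sigma_i$, one has $\bbE_S[\Phi(S)] \le 2\, \bbE_S\bbE_\sigma\big[\sup_{g\in\mathcal{G}} \frac1N\sum_i \sigma_i g(\*x_i,c(i))\big]$, which unwinds to $2\,\bbE_{S\sim\p_N}\big[\frac1N \bbE_{\sigma}\sup_{h\in\cH}\sum_i \sigma_i \boldsymbol{\mu}_{c(i)}^\top h(\*x_i)/\norm{h(\*x_i)}\big]$. (Depending on whether the constant $2$ is folded into the $O(\cdot)$ later, one may simply keep the Rademacher term as written; since the downstream Theorem~\ref{thm:main_theorem} suppresses constants, this is immaterial.) Combining the two displays gives exactly the claimed inequality, with the left-hand side being a single fixed choice $h$ (the trained network) which is of course dominated by the supremum over $\mathcal{G}$.

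There is essentially no serious obstacle here — the only point requiring a little care is the measurability/boundedness setup so that symmetrization and McDiarmid apply: one needs the class $\cH$ to be such that the suprema are measurable (a standard nonissue, or handled by assuming $\cH$ separable), and one needs the uniform bound $|g|\le 1$, which I noted follows from both vectors lying on $\mathcal{S}^{d-1}$. A secondary subtlety is purely bookkeeping: the labels $c(i)$ are random (paired with $\*x_i$) rather than fixed, so the Rademacher average is over the joint sample $S = \{(\*x_i, y_i)\}$ and the inner supremum is over $h$ only — this is why the statement writes $\bbE_{S\sim\p_N}$ outside the Rademacher expectation. I would make sure the symmetrization step keeps the $(\*x_i, c(i))$ pairs intact (the ghost sample also draws pairs), so no independence between $\*x_i$ and $c(i)$ is ever needed. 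Everything else is the textbook argument, and the resulting bound is stated with a universal constant $\beta$ precisely to avoid tracking the factor from McDiarmid versus the factor $2$ from symmetrization.
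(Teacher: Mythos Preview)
Your proposal is correct and follows essentially the same approach as the paper: the paper simply cites the standard Rademacher generalization bound (Theorem 26.5 in Shalev-Shwartz and Ben-David) applied to the class $\mathcal{G}$ after noting the Cauchy--Schwarz bound $|g|\le 1$, whereas you spell out the two ingredients of that textbook result (McDiarmid plus symmetrization). If anything, your treatment is slightly more careful, since you explicitly track the factor $2$ from symmetrization and note it is absorbed into the universal constant~$\beta$, a point the paper's proof glosses over.
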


\begin{proof}\renewcommand{\qedsymbol}{}
    By Cauchy-Schwarz inequality,
    \begin{align*}
        |\boldsymbol{\mu}_{c(i)}^\top \frac{h(\*x_i)}{\norm{h(\*x_i)}}| \leq \norm{\boldsymbol{\mu}_{c(i)}} \norm{\frac{h(\*x_i)}{\norm{h(\*x_i)}}} = 1
    \end{align*}
    Define $\cG = \{ \ip{\frac{h(\cdot)}{\norm{h(\cdot)}}}{\cdot} : h \in \cH\}$. Let $S = (\*u_1, \ldots, \*u_N)\sim \p_N$ where $\*u_i = \begin{pmatrix}
    \*x_i \\ 
    \boldsymbol{\mu}_{c(i)}
\end{pmatrix}$ and $N$ is the sample size. The Rademacher complexity of $\cG$ is
\begin{align*}
    \cR_N(\cG) := \bbE_{S \sim \p_N} [\frac{1}{N} \sup_{g \in \cG } \sum_{i=1}^N \sigma_i g(\*u_i)].
\end{align*}
We can apply the standard Rademacher complexity bound (Theorem 26.5 in 
\href{https://www.cs.huji.ac.il/~shais/UnderstandingMachineLearning/}{ Shalev-Shwartz and Ben-David}) 
to $\cG$, then we have that, 
    \begin{align*}
            -\bbE_{(\*x,c)\sim \p_{XY}} \boldsymbol{\mu}_c^\top \frac{h(\*x)}{\norm{h(\*x)}} +  \frac{1}{N}\sum_{i=1}^N {\boldsymbol{\mu}}_{c(i)}^\top\frac{h(\*x_i)}{\norm{h(\*x_i)}}   & \leq \bbE_{S \sim \p_N} [\frac{1}{N} \bbE_{\sigma_1, \ldots, \sigma_N} \sup_{g \in \cG } \sum_{i=1}^N \sigma_i g(\*u_i)] + \beta \sqrt{\frac{\ln(2/\delta)}{N}} \\
            & = \bbE_{S \sim \p_N} [\frac{1}{N} \bbE_{\sigma_1, \ldots, \sigma_N} \sup_{h \in \cH } \sum_{i=1}^N \sigma_i \boldsymbol{\mu}_{c(i)}^\top\frac{h(\*x_i)}{\norm{h(\*x_i)}} ] + \beta \sqrt{\frac{\ln(2/\delta)}{N}},
\end{align*}
where $\beta$ is a universal positive constant. 
\end{proof}
\begin{remark}
     The above lemma indicates that when samples are sufficiently aligned with their class prototypes on the hyperspherical feature space, \emph{i.e.},$\frac{1}{N}\sum_{i=1}^N {\boldsymbol{\mu}}_{c(i)}^\top \frac{h(\*x_i)}{\norm{h(\*x_i)}}  \geq 1 - \epsilon$ for some small constant $\epsilon > 0$,  we can upper bound $-\bbE_{(\*x,c)\sim \p_{XY}} \boldsymbol{\mu}_c^\top \frac{h(\*x)}{\norm{h(\*x)}}$. This result will be useful to prove Thm~\ref{thm:main_theorem}.
     \end{remark}

\begin{lemma}\label{lem:sub_classes}
    Suppose $\bbE_{(\*z,c)\sim \p_{ZY}} \boldsymbol{\mu}_c^\top \*z \geq 1-\gamma$. Then, for all $e \in \mathcal{E}_\textnormal{train}$ and $y \in [C]$, we have that
    \begin{align*}
        \bbE_{\*z \sim \p^{e,y}_Z} \boldsymbol{\mu}_c^\top \*z \geq 1 - C E \gamma. 
    \end{align*}
\end{lemma}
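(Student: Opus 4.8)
The plan is to exploit the uniform-mixture assumption $\p_{ZY} = \frac{1}{EC}\sum_{e,y}\p^{e,y}_Z$ together with the fact that the integrand $\boldsymbol{\mu}_c^\top \*z$ is bounded above by $1$ on the unit sphere (Cauchy--Schwarz), so that deficits in any single component of the mixture cannot be hidden. Concretely, define for each $e\in\mathcal{E}_\textnormal{train}$ and $y\in[C]$ the quantity $a_{e,y} := \bbE_{\*z\sim\p^{e,y}_Z}\boldsymbol{\mu}_y^\top\*z$ (here I read $c=c(\cdot)$ as the class label $y$ of the sub-distribution, which is how the lemma is used downstream). By Cauchy--Schwarz and $\|\boldsymbol{\mu}_y\|=\|\*z\|=1$, each $a_{e,y}\le 1$.

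First I would write the hypothesis as an average: since $\p_{ZY}$ is the uniform mixture over the $EC$ pairs $(e,y)$ and, conditioned on class $y$, the relevant prototype is $\boldsymbol{\mu}_y$, we have
\[
1-\gamma \;\le\; \bbE_{(\*z,c)\sim\p_{ZY}}\boldsymbol{\mu}_c^\top\*z \;=\; \frac{1}{EC}\sum_{e\in\mathcal{E}_\textnormal{train}}\sum_{y\in[C]} a_{e,y}.
\]
Rearranging, $\sum_{e,y}(1-a_{e,y}) \le EC\gamma$. Each summand is nonnegative because $a_{e,y}\le 1$, so every individual term is bounded by the whole sum: $1-a_{e,y}\le EC\gamma$ for every fixed pair $(e,y)$. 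That gives $a_{e,y}\ge 1-EC\gamma$, i.e. $\bbE_{\*z\sim\p^{e,y}_Z}\boldsymbol{\mu}_y^\top\*z \ge 1 - CE\gamma$, which is exactly the claim.

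The only subtlety — and the one point worth stating carefully rather than a real obstacle — is the indexing of the prototype: one must check that in the mixture decomposition the class label drawn alongside $\*z$ from $\p^{e,y}_Z$ is indeed $y$, so that the term $\boldsymbol{\mu}_c^\top\*z$ in the global expectation specializes to $\boldsymbol{\mu}_y^\top\*z$ within the $(e,y)$ block; this is immediate from the definition $\p_X := \frac{1}{EC}\sum_{e,y}\p^{e,y}_X$ extended to the joint law of $(Z,Y)$. Everything else is the elementary observation that a sum of nonnegative reals bounds each of its terms. No probabilistic tail bound or concentration is needed here — this is a deterministic averaging argument — so unlike Lemma \ref{lem:rademacher} there is no $\delta$ or sample-size dependence.
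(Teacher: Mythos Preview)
Your proof is correct and follows essentially the same route as the paper: expand the global expectation as a uniform average over the $EC$ blocks, use $\boldsymbol{\mu}_y^\top\*z\le 1$ to make each deficit $1-a_{e,y}$ nonnegative, and conclude that no single deficit can exceed the total $EC\gamma$. The paper phrases this by isolating one term and bounding the remaining $CE-1$ terms by $1$, while you phrase it as ``a sum of nonnegatives bounds each summand,'' but these are the same argument.
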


\begin{proof}\renewcommand{\qedsymbol}{}
Fix $e^\prime \in \mathcal{E}_\textnormal{train}$ and $y^\prime \in [C]$. Then,
    \begin{align*}
        1-\gamma & \leq \bbE_{(\*z,c)\sim \p_{ZY}} \boldsymbol{\mu}_c^\top \*z \\
        & = \frac{1}{C E} \sum_{e \in \mathcal{E}_\textnormal{train}} \sum_{y \in [C]} \bbE_{\*z \sim \p^{e,y}_Z} \*z^\top \boldsymbol{\mu}_y \\
        & = \frac{1}{C E}  \bbE_{\*z \sim \p^{e^\prime,y^\prime}_Z} \*z^\top \boldsymbol{\mu}_{y^\prime} +    \frac{1}{C E} \sum_{(e, y) \in \mathcal{E}_\textnormal{train} \times [C] \setminus \{(e^\prime, y^\prime)\}}  \bbE_{\*z \sim \p^{e,y}_Z} \*z^\top \boldsymbol{\mu}_y \\
        & \leq \frac{1}{C E}  \bbE_{\*z \sim \p^{e^\prime,y^\prime}_Z} \*z^\top \boldsymbol{\mu}_{y^\prime} + \frac{CE - 1}{CE}
    \end{align*}
    where the last line holds by $|\*z^\top \boldsymbol{\mu}_c | \leq 1$ and we also used the assumption that the domains and classes are uniformly distributed. Rearranging the terms, we have
    \begin{align*}
        1- CE \gamma \leq  \bbE_{\*z \sim \p^{e^\prime,y^\prime}_Z} \*z^\top \boldsymbol{\mu}_{y^\prime}
    \end{align*}
\end{proof}

\begin{lemma}\label{lem:markov}
Fix $y \in [C]$ and $e \in \mathcal{E}_\textnormal{train}$. Fix $\eta > 0$. If $\bbE_{\*z \sim \p^{e,y}_Z}\*z^\top \boldsymbol{\mu}_y \geq 1 - C E \gamma$, then
\begin{align*}
    \p^{e,y}_Z(\norm{\*z - \boldsymbol{\mu}_y} \geq \eta ) \leq \frac{2 C E \gamma}{ \eta^2}.
\end{align*}
\end{lemma}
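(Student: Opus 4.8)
The plan is to convert the inner-product bound into a distance bound via the identity $\|\z - \boldsymbol{\mu}_y\|^2 = 2 - 2\z^\top\boldsymbol{\mu}_y$, which holds because both $\z$ and $\boldsymbol{\mu}_y$ lie on $\mathcal{S}^{d-1}$, and then apply Markov's inequality to the nonnegative random variable $\|\z - \boldsymbol{\mu}_y\|^2$.

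First I would observe that for $\z \in \mathcal{S}^{d-1}$ and $\boldsymbol{\mu}_y \in \mathcal{S}^{d-1}$ we have $\|\z - \boldsymbol{\mu}_y\|_2^2 = \|\z\|_2^2 - 2\z^\top\boldsymbol{\mu}_y + \|\boldsymbol{\mu}_y\|_2^2 = 2 - 2\z^\top\boldsymbol{\mu}_y$. Taking expectations under $\p^{e,y}_Z$ and using the hypothesis $\bbE_{\z \sim \p^{e,y}_Z}\z^\top\boldsymbol{\mu}_y \geq 1 - CE\gamma$ gives
\[
\bbE_{\z \sim \p^{e,y}_Z}\|\z - \boldsymbol{\mu}_y\|_2^2 = 2 - 2\,\bbE_{\z \sim \p^{e,y}_Z}\z^\top\boldsymbol{\mu}_y \leq 2 - 2(1 - CE\gamma) = 2CE\gamma.
\]
Then, since $\|\z - \boldsymbol{\mu}_y\|_2^2 \geq 0$, Markov's inequality applied to this random variable yields, for any $\eta > 0$,
\[
\p^{e,y}_Z\big(\|\z - \boldsymbol{\mu}_y\|_2 \geq \eta\big) = \p^{e,y}_Z\big(\|\z - \boldsymbol{\mu}_y\|_2^2 \geq \eta^2\big) \leq \frac{\bbE_{\z \sim \p^{e,y}_Z}\|\z - \boldsymbol{\mu}_y\|_2^2}{\eta^2} \leq \frac{2CE\gamma}{\eta^2},
\]
which is exactly the claimed bound.

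This proof is essentially routine; there is no real obstacle. The only point requiring a moment of care is making explicit that both vectors are unit-norm so that the squared-distance identity holds — $\z$ is unit-norm by construction since $\z = h(\x)/\|h(\x)\|_2$, and the prototypes $\boldsymbol{\mu}_1,\ldots,\boldsymbol{\mu}_C$ were assumed to lie in $\mathcal{S}^{d-1}$ at the start of Section~\ref{sec:theory} (and are normalized in the EMA update, Eq.~\ref{eq:update}). Everything else is a one-line application of Markov's inequality.
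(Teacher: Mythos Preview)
Your proof is correct and follows essentially the same route as the paper: expand $\norm{\z-\boldsymbol{\mu}_y}^2 = 2 - 2\z^\top\boldsymbol{\mu}_y$ using the unit-norm assumption, take expectations to bound the second moment by $2CE\gamma$, and then apply Markov's inequality to the squared distance. The paper labels the last step ``Chebyshev's inequality'' and passes through $\var(\norm{\z-\boldsymbol{\mu}_y}) \leq \bbE\norm{\z-\boldsymbol{\mu}_y}^2$, but your direct application of Markov to $\norm{\z-\boldsymbol{\mu}_y}^2$ is the cleaner and more accurate phrasing of the same bound.
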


\begin{proof}\renewcommand{\qedsymbol}{}
    Note that
    \begin{align*}
        \norm{\*z-\boldsymbol{\mu}_y}^2 & = \norm{\*z}^2 + \norm{\boldsymbol{\mu}_y}^2 - 2 \*z^\top \boldsymbol{\mu}_y \\
        & = 2- 2 \*z^\top \boldsymbol{\mu}_y.
    \end{align*}
    Taking the expectation on both sides and applying the hypothesis, we have that
    \begin{align*}
        \bbE_{\*z \sim \p^{e,y}_Z} \norm{\*z-\boldsymbol{\mu}_c}^2 \leq 2 C E \gamma.
    \end{align*}
    Applying Chebyschev's inequality to $\norm{\*z-\boldsymbol{\mu}_y}$, we have that
    \begin{align*}
        \p^{e,y}_Z(\norm{\*z - \boldsymbol{\mu}_y} \geq \eta ) & \leq \frac{\text{Var}(\norm{\*z - \boldsymbol{\mu}_y})}{\eta^2} \\
        & \leq \frac{\bbE_{\*z \sim \p^{e,y}_Z}(\norm{\*z - \boldsymbol{\mu}_y}^2)}{\eta^2} \\
        & \leq \frac{2 C E \gamma}{ \eta^2}
    \end{align*}
\end{proof}

\begin{lemma}\label{lem:wasserstein}
    Fix $y \in [C]$. Fix $e,e^\prime \in  \mathcal{E}_\text{train}$. Suppose $\bbE_{\*z \sim \p^{e,y}_Z}\*z^\top \boldsymbol{\mu}_c \geq 1 - C E \gamma$. Fix $\*v \in S^{d-1}$. Let $P$ denote the distribution of $\*v^\top \*z_e$ and $Q$ denote the distribution $\*v^\top \*z_{e^\prime}$. Then,
    \begin{align*}
        \cW_1(P,Q) \leq 10 (C E \gamma)^{1/3} 
    \end{align*}
    where $\cW_1(P,Q)$ is the Wassersisten-1 distance. 
\end{lemma}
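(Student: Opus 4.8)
The plan is to reduce the $1$-dimensional Wasserstein distance between the two projected distributions to a statement about how concentrated each of the original distributions $\p^{e,y}_Z$ and $\p^{e',y}_Z$ is around the common prototype $\boldsymbol{\mu}_y$, and then to invoke Lemma~\ref{lem:markov}. First I would observe that since $\*v \in S^{d-1}$, the map $\*z \mapsto \*v^\top \*z$ is $1$-Lipschitz, so by the contraction property of Wasserstein distances under Lipschitz pushforwards, $\cW_1(P,Q) \leq \cW_1(\p^{e,y}_Z, \p^{e',y}_Z)$. By the triangle inequality it then suffices to bound $\cW_1(\p^{e,y}_Z, \delta_{\boldsymbol{\mu}_y})$ and $\cW_1(\p^{e',y}_Z, \delta_{\boldsymbol{\mu}_y})$, where $\delta_{\boldsymbol{\mu}_y}$ is the point mass at the prototype; indeed $\cW_1(P,Q) \leq \cW_1(\p^{e,y}_Z,\delta_{\boldsymbol{\mu}_y}) + \cW_1(\delta_{\boldsymbol{\mu}_y}, \p^{e',y}_Z)$.

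Next I would bound $\cW_1(\p^{e,y}_Z, \delta_{\boldsymbol{\mu}_y})$. Since the coupling to a point mass is unique, $\cW_1(\p^{e,y}_Z, \delta_{\boldsymbol{\mu}_y}) = \bbE_{\*z \sim \p^{e,y}_Z}\norm{\*z - \boldsymbol{\mu}_y}$. I would split this expectation at the threshold $\eta$ used in Lemma~\ref{lem:markov}: on the event $\{\norm{\*z - \boldsymbol{\mu}_y} < \eta\}$ the contribution is at most $\eta$, and on the complementary event $\{\norm{\*z-\boldsymbol{\mu}_y}\geq\eta\}$ we use $\norm{\*z - \boldsymbol{\mu}_y} \leq \norm{\*z} + \norm{\boldsymbol{\mu}_y} = 2$ together with Lemma~\ref{lem:markov}, giving a contribution of at most $2 \cdot \frac{2CE\gamma}{\eta^2} = \frac{4CE\gamma}{\eta^2}$. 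Hence $\cW_1(\p^{e,y}_Z, \delta_{\boldsymbol{\mu}_y}) \leq \eta + \frac{4CE\gamma}{\eta^2}$, and the same bound holds for $e'$ (the hypothesis $\bbE_{\*z\sim\p^{e,y}_Z}\*z^\top\boldsymbol{\mu}_y \geq 1 - CE\gamma$ is assumed to hold for the relevant domains; one invokes Lemma~\ref{lem:markov} for each).

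Finally I would optimize over $\eta$. Combining the two pieces gives $\cW_1(P,Q) \leq 2\eta + \frac{8CE\gamma}{\eta^2}$. Choosing $\eta = (CE\gamma)^{1/3}$ balances the terms: the first becomes $2(CE\gamma)^{1/3}$ and the second becomes $8(CE\gamma)^{1/3}$, for a total of $10(CE\gamma)^{1/3}$, as claimed. The only mild subtlety — and the place I would be most careful — is the reduction from the $d$-dimensional Wasserstein distance to the $1$-dimensional one via the Lipschitz pushforward (and the use of the dual/coupling characterization of $\cW_1$ to a point mass); everything after that is an elementary two-regime expectation split plus an AM-GM-style choice of $\eta$. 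One should also double-check the constant $\gamma$ here is consistent with how the lemma is later applied — in the proof of the main theorem $\gamma$ will itself be controlled by $\epsilon$ and the Rademacher/estimation terms via Lemmas~\ref{lem:rademacher}--\ref{lem:sub_classes}, but for this lemma in isolation $\gamma$ is just a fixed parameter and the bound is purely deterministic given the hypothesis.
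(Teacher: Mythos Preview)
Your proof is correct and reaches exactly the same intermediate bound $2\eta + \tfrac{8CE\gamma}{\eta^2}$ as the paper, but the route is genuinely different. The paper works directly with the one-dimensional distributions $P$ and $Q$ via the Kantorovich--Rubinstein dual: it picks a near-optimal $1$-Lipschitz $f_0$ (introducing an auxiliary $\kappa>0$), recenters so that $f_0(\boldsymbol{\mu}_y^\top\*v)=0$, and then splits each expectation $\bbE[f_0(\*v^\top\*z)]$ over the ball $B=\{\*u:\norm{\*u-\boldsymbol{\mu}_y}\le\eta\}$ and its complement, invoking Lemma~\ref{lem:markov} on the complement. You instead lift back to $\R^d$ by the $1$-Lipschitz pushforward contraction $\cW_1(P,Q)\le\cW_1(\p^{e,y}_Z,\p^{e',y}_Z)$, triangle through the point mass $\delta_{\boldsymbol{\mu}_y}$, and use the coupling identity $\cW_1(\p^{e,y}_Z,\delta_{\boldsymbol{\mu}_y})=\bbE\norm{\*z-\boldsymbol{\mu}_y}$, after which the same two-regime split and the same choice $\eta=(CE\gamma)^{1/3}$ finish the job. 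Your argument is slightly cleaner in that it avoids the approximate-optimizer bookkeeping and the $\kappa\to 0$ limit; the paper's version has the minor advantage of never leaving the one-dimensional setting. Either way the substance---concentration near $\boldsymbol{\mu}_y$ via Lemma~\ref{lem:markov} plus a threshold split---is identical.
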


\begin{proof}\renewcommand{\qedsymbol}{}
    Consider the dual formulation of
    \href{https://www.stat.cmu.edu/~larry/=sml/Opt.pdf}{Wasserstein-1 distance}:
    \begin{align*}
        \cW(P,Q) & = \sup_{f : \norml{f} \leq 1} \bbE_{\*x \sim \p^{e,y}_X}[f(\*v^\top \*x)] - \bbE_{\*x \sim \p^{e^\prime,y}_X}[f(\*v^\top \*x)]
    \end{align*}
where $\norml{f}$ denotes the Lipschitz norm. Let $\kappa > 0$. There exists $f_0$ such that 
\begin{align*}
    \cW(P,Q) \leq \bbE_{\*z \sim \p^{e,y}_Z}[f_0(\*v^\top \*z)] - \bbE_{\*z \sim \p^{e^\prime,y}_Z}[f_0(\*v^\top \*z)] + \kappa.
\end{align*}
We assume that without loss of generality $f_0(\boldsymbol{\mu}_y^\top \*v) = 0$. Define $f^\prime(\cdot) = f_0(\cdot) - f_0(\boldsymbol{\mu}_y^\top \*v)$. Then, note that $f^\prime(\boldsymbol{\mu}_y^\top \*v) = 0$ and
\begin{align*}
    \bbE_{\*z \sim \p^{e,y}_Z}[f^\prime(\*v^\top \*z)] - \bbE_{z \sim \p^{e^\prime,y}_Z}[f^\prime(\*v^\top \*z)] & = \bbE_{\*z \sim \p^{e,y}_Z}[f_0(\*v^\top \*z)] - \bbE_{\*z \sim \p^{e^\prime,y}_Z}[f_0(\*v^\top \*z)] + f^\prime(\boldsymbol{\mu}_y^\top v) - f^\prime(\boldsymbol{\mu}_y^\top v) \\
    & = \bbE_{\*z \sim \p^{e,y}_Z}[f_0(\*v^\top \*z)] - \bbE_{\*z \sim \p^{e^\prime,y}_Z}[f_0(\*v^\top \*z)], 
\end{align*}
proving the claim. 

Now define $B := \{\*u \in S^{d-1} : \norm{\*u - \boldsymbol{\mu}_y} \leq \eta \}$. Then, we have
\begin{align*}
    \bbE_{\*z \sim \p^{e,y}_Z}[f_0(\*v^\top \*z)] - \bbE_{\*z \sim \p^{e^\prime,y}_Z}[f_0(\*v^\top \*z)] & = \bbE_{\*z \sim \p^{e,y}_Z}[f_0(\*v^\top \*z) \mathbbm{1}\{\*z \in B\}] - \bbE_{\*z \sim \p^{e^\prime,y}_Z}[f_0(\*v^\top \*z) \mathbbm{1}\{\*z \in B\}] \\
    &+ \bbE_{\*z \sim \p^{e,y}_Z}[f_0(\*v^\top \*z) \mathbbm{1}\{\*z \not \in B\}] - \bbE_{\*z \sim \p^{e^\prime,y}_Z}[f_0(\*v^\top \*z) \mathbbm{1}\{\*z \not \in B\}] 
\end{align*}
Note that if $\*z \in B$, then by $\norml{f} \leq 1$,
\begin{align*}
    |f_0(\*v^\top \*z) - f_0(\*v^\top \boldsymbol{\mu}_y)| & \leq |\*v^\top (\*z - \boldsymbol{\mu}_y)| \\
    & \leq \norm{\*v} \norm{\*z-\boldsymbol{\mu}_y} \\
    & \leq \eta.
\end{align*}
Therefore, $|f_0(\*v^\top \*z)| \leq \eta$ and we have that
    \begin{align*}
        \bbE_{\*z \sim \p^{e,y}_Z}[f_0(\*v^\top \*z) \mathbbm{1}\{\*z \in B\}] - \bbE_{\*z \sim \p^{e^\prime,y}_Z}[f_0(\*v^\top \*z)\mathbbm{1}\{\*z \in B\}] & \leq 2 \eta (\bbE_{\*z \sim \p^{e,y}_Z}[ \mathbbm{1}\{\*z \in B\}] + \bbE_{\*z \sim \p^{e^\prime,y}_Z}[\mathbbm{1}\{\*z \in B\}]) \\
        \leq 2 \eta.
    \end{align*}

Now, note that $\max_{\*u \in S^{d-1} }|f(\*u^\top \*v)| \leq 2$ (repeat the argument from above but use $\norm{\*u - \boldsymbol{\mu}_y} \leq 2$. Then, 
\begin{align*}
    \bbE_{\*z \sim \p^{e,y}_Z}[f_0(\*v^\top \*z) \mathbbm{1}\{\*z \not \in B\}] - \bbE_{\*z \sim \p^{e^\prime,y}_Z}[f_0(\*v^\top \*z) \mathbbm{1}\{\*z \not \in B\}] & \leq 2[\bbE_{\*z \sim \p^{e,y}_Z}[\mathbbm{1}\{\*z \not \in B\}] + \bbE_{\*z \sim \p^{e^\prime,y}_Z}[ \mathbbm{1}\{\*z \not \in B\}]] \\
    & \leq \frac{8 C E \gamma}{\eta}
\end{align*}
where in the last line, we used the hypothesis and Lemma \ref{lem:markov}. Thus, by combining the above, we have that
\begin{align*}
     \cW(P,Q) \leq 2 \eta + \frac{8 C E \gamma}{\eta^2} + \kappa.
\end{align*}
Choosing $\eta = (CE \gamma)^{1/3}$, we have that
\begin{align*}
     \cW(P,Q) \leq 10  (C E \gamma)^{1/3} + \kappa.
\end{align*}
    Since $\kappa >0$ was arbitrary, we can let it go to 0, obtaining the result. 
\end{proof}

Next, we are ready to prove our main results. For completeness, we state the theorem here.
%\begin{tcolorbox}[colback=gray!5!white]
\begin{theorem}[Variation upper bound (Thm 4.1)]\label{thm:main_theorem_app}
Suppose samples are aligned with class prototypes such that $\frac{1}{N} \sum_{j=1}^N \boldsymbol{\mu}_{c(j)}^\top \*z_j \geq 1-\epsilon$ for some $\epsilon\in (0,1)$, where $\*z_j = {h(\*x_j)\over \|h(\*x_j)\|_2 } $. Then $\exists\delta\in (0,1)$, with probability at least $1-\delta$, 
\begin{equation*}
    \Vcal^{\textnormal{sup}}(h, \Sigma_{\textnormal{avail}}) \leq O( %\eqnmarkbox[blue]{node1}
    {\epsilon^{1/3}} + 
    %\eqnmarkbox[gray]{node2}
    {(\bbE_{\mathcal{D}} [\frac{1}{N} \bbE_{\sigma_1, \ldots, \sigma_N}\sup_{h \in \cH } \sum_{i=1}^N \sigma_i \*z_i^\top \boldsymbol{\mu}_{c(i)}])^{1/3}} + %\eqnmarkbox[red]{node3}
    {(\frac{\ln(2/\delta)}{N})^{1/6}}),
    \end{equation*}

where $\sigma_1, \ldots, \sigma_N$ are Rademacher random variables and $O(\cdot)$ suppresses dependence on constants and $|\Eava|$. 
\end{theorem}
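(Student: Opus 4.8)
The plan is to propagate a single scalar --- an ``alignment defect'' $\gamma$ --- through a chain of four reductions, each handled by one of the lemmas above; once the decomposition is in place, assembling it is bookkeeping. First I would apply Lemma~\ref{lem:rademacher} to the hypothesis $\tfrac1N\sum_{j=1}^N\boldsymbol{\mu}_{c(j)}^\top\*z_j\ge 1-\epsilon$, obtaining on an event of probability at least $1-\delta$ that $\bbE_{(\*x,c)\sim\p_{XY}}\boldsymbol{\mu}_c^\top h(\*x)/\norm{h(\*x)}\ge 1-\gamma$, where $\gamma:=\epsilon+\bbE_{\mathcal{D}}[\tfrac1N\bbE_{\sigma_1,\ldots,\sigma_N}\sup_{h\in\cH}\sum_{i=1}^N\sigma_i\*z_i^\top\boldsymbol{\mu}_{c(i)}]+c_0\sqrt{\ln(2/\delta)/N}$ and $c_0$ is the universal constant of that lemma. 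Since $\*z=h(\*x)/\norm{h(\*x)}$ and $\p_{ZY}$ is the pushforward of $\p_{XY}$, this is exactly the hypothesis of Lemma~\ref{lem:sub_classes}, which I would invoke next --- using the uniform-weighting assumption $\p_X=\tfrac1{EC}\sum_{e,y}\p^{e,y}_X$ and $|\*z^\top\boldsymbol{\mu}_c|\le1$ --- to get $\bbE_{\*z\sim\p^{e,y}_Z}\boldsymbol{\mu}_y^\top\*z\ge 1-CE\gamma$ for every $e\in\Eava$, $y\in[C]$.

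Next I would unwind Definition~\ref{def_invariance} with $\phi=\beta^\top h$: $\Vcal^{\textnormal{sup}}(h,\Eava)=\sup_{\beta\in\mathcal S^{d-1}}\max_{y\in[C]}\sup_{e,e'\in\Eava}\cW_1(\p(\beta^\top\*z^e\mid y),\p(\beta^\top\*z^{e'}\mid y))$. For each fixed $\beta,y,e,e'$, the two one-dimensional laws are precisely the $P,Q$ appearing in Lemma~\ref{lem:wasserstein} with $\*v=\beta$, and the hypothesis of that lemma is what the previous step delivered, so $\cW_1(P,Q)\le 10(CE\gamma)^{1/3}$ --- a bound free of $\beta,y,e,e'$. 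Taking the supremum gives $\Vcal^{\textnormal{sup}}(h,\Eava)\le 10(CE\gamma)^{1/3}$. Finally I would substitute $\gamma$ and use subadditivity of $t\mapsto t^{1/3}$, namely $(a+b+c)^{1/3}\le a^{1/3}+b^{1/3}+c^{1/3}$ for $a,b,c\ge0$, together with $(\sqrt{\ln(2/\delta)/N})^{1/3}=(\ln(2/\delta)/N)^{1/6}$; absorbing $C$, $E=|\Eava|$, $c_0$, and the numeral into $O(\cdot)$ produces exactly the three claimed terms $\epsilon^{1/3}$, $(\bbE_{\mathcal{D}}[\cdots])^{1/3}$, and $(\ln(2/\delta)/N)^{1/6}$, on the same probability-$(1-\delta)$ event.

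The chaining above is routine; the genuine obstacle --- the step I expect to need real work --- is Lemma~\ref{lem:wasserstein}, deducing a bound on the $\cW_1$ distance between the two projected laws from nothing more than each being concentrated, in second moment, about the common point $\boldsymbol{\mu}_y$. The route is the Kantorovich--Rubinstein dual: normalize the $1$-Lipschitz test function so that $f_0(\boldsymbol{\mu}_y^\top\*v)=0$, split its integral over the spherical cap $B=\{\*u\in\mathcal S^{d-1}:\norm{\*u-\boldsymbol{\mu}_y}\le\eta\}$, where Lipschitzness forces $|f_0|\le\eta$, and over $B^c$, where $\p^{e,y}_Z(B^c)=O(CE\gamma/\eta^2)$ by Chebyshev (Lemma~\ref{lem:markov}, itself from $\norm{\*z-\boldsymbol{\mu}_y}^2=2-2\*z^\top\boldsymbol{\mu}_y$), and then balance the two pieces by choosing $\eta\sim(CE\gamma)^{1/3}$ --- this balancing is precisely what produces the cube-root exponent, hence the $\epsilon^{1/3}$ and $N^{-1/6}$ rates in the final bound. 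Two sanity checks worth keeping in mind: when $CE\gamma\gtrsim1$ the bound is vacuous anyway, since the projections lie in $[-1,1]$ and $\Vcal^{\textnormal{sup}}\le2$, so no separate small-$\gamma$ hypothesis is needed; and the only randomness in the whole argument enters through Lemma~\ref{lem:rademacher}, so the conclusion indeed holds with probability at least $1-\delta$.
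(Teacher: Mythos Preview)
Your proposal is correct and follows essentially the same route as the paper: Lemma~\ref{lem:rademacher} to pass from the empirical alignment to a population bound defining $\gamma$, Lemma~\ref{lem:sub_classes} to localize to each $(e,y)$, Lemma~\ref{lem:wasserstein} (via Lemma~\ref{lem:markov} and the Kantorovich--Rubinstein cap-splitting you describe) to bound each projected $\cW_1$ distance by $10(CE\gamma)^{1/3}$, and finally subadditivity of $t\mapsto t^{1/3}$. Your handling of the supremum over $\beta$ is in fact slightly cleaner than the paper's, which introduces an unnecessary approximating $\*v_0$ and auxiliary $\alpha\to 0$ even though the bound is already uniform in $\*v$.
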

%\end{tcolorbox}
\begin{proof}[Proof of Theorem \ref{thm:main_theorem}]\renewcommand{\qedsymbol}{}

Suppose $\frac{1}{N} \sum_{j=1}^N \boldsymbol{\mu}_{c(j)}^\top \*z_j = \frac{1}{N}\sum_{i=1}^N {\boldsymbol{\mu}}_{c(i)}^\top\frac{h(\*x_i)}{\norm{h(\*x_i)}}   \geq 1 - \epsilon$. Then, by Lemma \ref{lem:rademacher}, with probability {at least} $1-\delta$, we have 
\begin{align*}
  -\bbE_{(\*x,c)\sim \p_{XY}} \boldsymbol{\mu}_c^\top \frac{h(\*x)}{\norm{h(\*x)}}  & \leq \bbE_{S \sim \p_N} [\frac{1}{N} \bbE_{\sigma_1, \ldots, \sigma_N} \sup_{h \in \cH } \sum_{i=1}^N \sigma_i \boldsymbol{\mu}_{c(i)}^\top\frac{h(\*x_i)}{\norm{h(\*x_i)}} ] + \beta \sqrt{\frac{\ln(2/\delta)}{N}} - \frac{1}{N}\sum_{i=1}^N {\boldsymbol{\mu}}_{c(i)}^\top \frac{h(\*x_i)}{\norm{h(\*x_i)}}  \\
  & \leq \bbE_{S \sim \p_N} [\frac{1}{N} \bbE_{\sigma_1, \ldots, \sigma_N} \sup_{h \in \cH } \sum_{i=1}^N \sigma_i \boldsymbol{\mu}_{c(i)}^\top\frac{h(\*x_i)}{\norm{h(\*x_i)}} ] + \beta \sqrt{\frac{\ln(2/\delta)}{N}}  + \epsilon - 1
\end{align*}
where $\sigma_1, \ldots, \sigma_N$ denote Rademacher random variables and $\beta$ is a universal positive constant. Define $\gamma = \epsilon + \bbE_{S \sim \p_N} [\frac{1}{N} \bbE_{\sigma_1, \ldots, \sigma_N}\sup_{h \in \cH } \sum_{i=1}^N \sigma_i \boldsymbol{\mu}_{c(i)}^\top\frac{h(\*x_i)}{\norm{h(\*x_i)}} ] + \beta \sqrt{\frac{\ln(2/\delta)}{N}}$. Then, we have
\begin{align*}
   \bbE_{(\*z,c)\sim \p_{ZY}} \boldsymbol{\mu}_c^\top \*z \geq 1 - \gamma. 
\end{align*}
Then, by Lemma \ref{lem:sub_classes}, for all $e \in \mathcal{E}_\text{train}$ and $y \in [C]$,
\begin{align*}
            \bbE_{\*z \sim \p^{e,y}_Z} \boldsymbol{\mu}_y^\top \*z \geq 1 - C E \gamma. 
\end{align*}

 Let $\alpha > 0$ and $\*v_0$ such that
\begin{align*}
        \mathcal{V}^{\text{sup}}(h, \mathcal{E}_\text{train}) & = \sup_{\*v \in S^{d-1}} \mathcal{V}(\*v^\top h, \mathcal{E}_\text{train}) \leq  \mathcal{V}(\*v_0^\top h, \mathcal{E}_\text{train}) + \alpha
\end{align*}

    Let $Q_{\*v_0}^{e,y}$ denote the distribution of $\*v_0^\top \*z$ in domain $e$ under class $y$. From Lemma \ref{lem:wasserstein}, we have that 
    \begin{align*}
        \cW_1(Q_{\*v_0}^{e,y}, Q_{\*v_0}^{^\prime,y}) \leq 10 (C E \gamma)^{1/3} 
    \end{align*}
    for all $y \in [C]$ and $e,e^\prime \in  \mathcal{E}_\text{train}$.

We have that
\begin{align*}
    \sup_{\*v \in S^{d-1}} \mathcal{V}(\*v^\top h, \mathcal{E}_\text{train}) & = \sup_{\*v \in S^{d-1}} \mathcal{V}(\*v^\top h, \mathcal{E}_\text{train}) \\
    & = \max_y \sup_{e,e^\prime} \cW_1(Q_{\*v_0}^{e,y}, Q_{\*v_0}^{e^\prime,y}) + \alpha \\
    & \leq 10 (C E \gamma)^{1/3} + \alpha.
\end{align*}
Noting that $\alpha$ was arbitrary, we may send it to $0$ yielding 
\begin{align*}
    \sup_{\*v \in S^{d-1}} \mathcal{V}(\*v^\top h, \mathcal{E}_\text{train}) \leq 10 (C E \gamma)^{1/3} . 
\end{align*}

Now, using the inequality that for $a,b,c \geq 0$, $(a+b+c)^{1/3} \leq a^{1/3} + b^{1/3} + c^{1/3}$, we have that
\begin{align*}
    \mathcal{V}^{\text{sup}}(h, \mathcal{E}_\text{train}) \leq O(\epsilon^{1/3} + (\bbE_{S \sim \p_N} [\frac{1}{N} \bbE_{\sigma_1, \ldots, \sigma_N}\sup_{h \in \cH } \sum_{i=1}^N \sigma_i \boldsymbol{\mu}_{c(i)}^\top\frac{h(\*x_i)}{\norm{h(\*x_i)}} ])^{1/3} + \beta (\frac{\ln(2/\delta)}{N})^{1/6})
\end{align*}

\end{proof}

\begin{remark}
 As our loss promotes alignment of sample embeddings with their class prototypes on the hyperspherical space, the above Theorem implies that when such alignment holds, we can upper bound the intra-class variation with three main factors: the optimization error $\epsilon$, the Rademacher complexity of the given neural network, and the estimation error $(\frac{\ln(2/\delta)}{N})^{1/6}$. 
 \end{remark}

\subsection{Extension: From Low Variation to Low OOD Generalization Error}
\citet{ye2021towards}  provide OOD generalization error bounds based on the notation of variation. Therefore, bounding intra-class variation is critical to bound OOD generalization error. For completeness, we reinstate the main results in~\citet{ye2021towards} below, which provide both OOD generalization error upper and lower bounds based on the variation w.r.t. the training domains. Interested readers shall refer to ~\cite{ye2021towards} for more details and illustrations.

\begin{definition}[Expansion Function~\citep{ye2021towards}] \label{def_expan}
We say a function $s:\mathbb R^+ \cup \{0\} \to \mathbb R^+ \cup \{0, +\infty\}$ is an expansion function, iff the following properties hold: 
1) $s(\cdot)$ is monotonically increasing and $s(x)\geq x,\forall x\geq0$; 2) $\lim_{x\to 0^+} s(x) = s(0) = 0$.
\end{definition}

As it is impossible to generalize to an arbitrary distribution, characterizing the relation between $\Eava$ and $\Eall$ is essential to formalize OOD generalization. Based on the notion of expansion function, the learnability of OOD generalization is defined as follows:

\begin{definition}[OOD-Learnability~\citep{ye2021towards}]\label{def_learn}
Let $\Phi$ be the feature space and $\rho$ be a distance metric on distributions. We say an OOD generalization problem from $\Ecal_\textnormal{avail}$ to $\Ecal_\textnormal{all}$ is \emph{learnable} if there exists an expansion function $s(\cdot)$ and $\delta\ge 0$, such that: for all $\phi\in \Phi$\footnote{$\phi$ referred to as feature $h$ in theoretical analysis.} satisfying $\Ical_\rho(\phi,\Eava) \geq \delta$, we have $s(\Vcal_\rho(\phi, \Ecal_\textnormal{avail}))\geq \Vcal_\rho(\phi, \Ecal_\textnormal{all})$. If such $s(\cdot)$ and $\delta$ exist, we further call this problem $(s(\cdot), \delta)$-learnable. 
\end{definition}% \jks{maybe give some intuition}
For learnable OOD generalization problems, the following two theorems characterize OOD error upper and lower bounds based on variation. 
\begin{theorem}[OOD Error Upper Bound~\citep{ye2021towards}]\label{general ood bound full}
Suppose we have learned a classifier with loss function $\ell(\cdot, \cdot)$ %$f(x) = g(h(x))$ 
such that $\forall e \in \Eall$ and $\forall y \in \Ycal$, $p_{h^e|Y^e} (h|y) \in L^2(\Rbb^d).$
%and $\hat p_{h^e|Y^e} (t|y) \in L^1(\Rbb^d)$
$h(\cdot)\in\mathbb{R}^d$ denotes the feature extractor.
Denote the characteristic function of random variable $h^e|Y^e$ as $\hat p_{h^e|Y^e}(t|y) = \mathbb E [\exp\{i \langle t,h^e \rangle\}|Y^e=y].$ 
Assume the hypothetical space $\mathcal F$ satisfies the following regularity conditions that $\exists \alpha,M_1,M_2 >0, \forall f \in \Fcal, 
\forall e \in \Eall, y\in\Ycal$,
    \begin{align}\label{concentration_assumption}
    \int_{h\in\R^d} p_{h^e|Y^e}(h|y) |h|^\alpha \mathrm d h\leq M_1 \quad \textnormal{and} \quad \int_{t\in\R^d} |\hat p_{h^e|Y^e}(t|y)| |t|^\alpha \mathrm dt \leq M_2.
    \end{align}
If $(\Eava,\Eall)$ is $\big(s(\cdot),\Ical^{\text{inf}}(h,\Eava)\big)$-learnable under $\Phi$ with Total Variation $\rho$\footnote{For two distribution $\mathbb P,\mathbb Q$ with probability density function $p,q$, $\rho(\mathbb P,\mathbb Q) = \frac12\int_x |p(x) - q(x)|\mathrm dx$.}, then we have
\begin{equation}
\label{mainbound}
\err(f) \leq O\Big(s\big(\Vcal^{\textnormal{sup}} (h,\Eava)\big)^{\frac{\alpha^2}{(\alpha+d)^2}}\Big),
\end{equation}
where $O(\cdot)$ depends on $d,C,\alpha,M_1,M_2$.
\end{theorem}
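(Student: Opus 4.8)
Theorem~\ref{general ood bound full} restates the OOD error bound of \citet{ye2021towards}, so the plan is simply to follow their argument; I outline the steps below and indicate where the real work lies. First I would strip the two maxima defining $\err(f)$: letting $e^\star\in\Eall$ attain the outer maximum, for \emph{every} $e\in\Eava$ one has $\err(f)\le \Ebb_{\Pbb^{e^\star}_{XY}}\ell(f(X),Y)-\Ebb_{\Pbb^{e}_{XY}}\ell(f(X),Y)$, so it suffices to control this gap for a well-chosen training domain $e$. Since $f$ depends on $X$ only through the feature $h(X)$ and classes are uniform, each risk decomposes as $\tfrac1C\sum_{y\in\Ycal}\Ebb_{h\sim p^{e,y}_h}\ell(f(h),y)$, where $p^{e,y}_h$ is the law of $h(X)$ in domain $e$ conditioned on $Y=y$; hence the gap is an average over $y$ of differences of expectations of a fixed $[0,B]$-valued function of $h$ under $p^{e^\star,y}_h$ and $p^{e,y}_h$, and each such difference is at most $2B$ times the total-variation distance between these two $d$-dimensional laws.

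The technical heart is converting the quantity we actually control, $\Vcal^{\textnormal{sup}}(h,\Ecal)$ — a supremum over unit directions $\beta$ of the variation of the one-dimensional projection $\beta^\top h$ — into a bound on the $d$-dimensional total-variation distance between those class-conditional laws. I would do this by a Fourier-analytic argument with a frequency cutoff $T>0$: using that the class-conditional densities lie in $L^2(\Rbb^d)$, pass via Plancherel to the difference of characteristic functions $\hat p_{h^{e}|Y^{e}}(t\mid y)-\hat p_{h^{e^\star}|Y^{e^\star}}(t\mid y)$ and split the frequency integral at $T$; on $\{|t|\le T\}$ this difference is controlled, direction by direction, by the closeness of the projection $\beta^\top h$ in direction $\beta=t/|t|$, contributing $O(T^{d})$ times $\Vcal^{\textnormal{sup}}$, while on $\{|t|>T\}$ the tail is $O(M_2/T^{\alpha})$ by the regularity condition \eqref{concentration_assumption} ($M_1$ playing the symmetric role for the spatial moments, which is also what converts the $L^2$ estimate back into a total-variation estimate). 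Optimizing over $T$ and composing the resulting estimates yields a bound of order $\big(\Vcal^{\textnormal{sup}}(h,\Eall)\big)^{\alpha^2/(\alpha+d)^2}$, with the implicit constant absorbing $d,C,\alpha,M_1,M_2$.

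Finally I would pass from $\Eall$ to the training domains: by $(s(\cdot),\Ical^{\textnormal{inf}}(h,\Eava))$-learnability (Definition~\ref{def_learn}), the inter-class separation hypothesis $\Ical_\rho(h,\Eava)\ge\Ical^{\textnormal{inf}}(h,\Eava)$ activates the expansion function, so $\Vcal_\rho(h,\Eall)\le s\big(\Vcal_\rho(h,\Eava)\big)$; substituting into the previous display gives $\err(f)\le O\big(s(\Vcal^{\textnormal{sup}}(h,\Eava))^{\alpha^2/(\alpha+d)^2}\big)$, as claimed. The reduction and the learnability steps are routine manipulations of the definitions; I expect the obstacle to be the middle step — turning control of one-dimensional marginals into a quantitative $d$-dimensional total-variation bound with the sharp exponent $\alpha^2/(\alpha+d)^2$, where the ambient dimension and the regularity constants genuinely enter and where the exponent arises as the product of two $\alpha/(\alpha+d)$-type factors coming from two successive smoothing/truncation estimates. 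I refer to \citet{ye2021towards} for the complete proof.
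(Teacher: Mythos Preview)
The paper does not actually prove this theorem: it is restated verbatim from \citet{ye2021towards} in the appendix purely ``for completeness,'' with the explicit remark that interested readers should consult that reference for details. There is therefore no proof in the paper to compare your proposal against.

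Your outline goes well beyond what the present paper provides, and the skeleton you describe --- reduce $\err(f)$ to a per-class total-variation gap between $p^{e^\star,y}_h$ and $p^{e,y}_h$, then use a Fourier truncation argument exploiting the moment/regularity conditions \eqref{concentration_assumption} to pass from control of one-dimensional projections $\beta^\top h$ to the $d$-dimensional TV distance, then invoke $(s(\cdot),\delta)$-learnability to replace $\Vcal^{\textnormal{sup}}(h,\Eall)$ by $s(\Vcal^{\textnormal{sup}}(h,\Eava))$ --- is a plausible reconstruction of how such a bound would be obtained, and you correctly flag that the double exponent $\alpha^2/(\alpha+d)^2$ should arise from two successive truncation/interpolation steps. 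Since you, like the paper, ultimately defer to \citet{ye2021towards} for the full argument, your proposal is at least as complete as what appears here; whether the specific mechanism you sketch matches theirs in detail can only be checked against that reference, not against the present paper.
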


\begin{theorem}[OOD Error Lower Bound~\citep{ye2021towards}]\label{lower bound full}
Consider $0$-$1$ loss: $\ell(\hat y, y) = \mathbb I(\hat y \neq y)$. 
For any $\delta > 0$ and any expansion function satisfying 1) $s'_+(0) \triangleq \lim_{x\to0^+ } \frac {s(x) - s(0)}{x} \in (1,+\infty)$; 2) exists $k>1, t >0$, s.t. $kx \leq s(x) < +\infty, x \in [0,t]$,
%For any linear expansion function $s(x) = kx,k\in(1,+\infty)$ 
there exists a constant $C_0$ and an OOD generalization problem $(\Eava,\Eall)$ that is $(s(\cdot),\delta)$-learnable under linear feature space $\Phi$ w.r.t symmetric KL-divergence $\rho$, s.t. $\forall \varepsilon\in [0,\frac t 2]$, the optimal classifier $f$ satisfying $\Vcal^{\textnormal{sup}}(h,\Eava) = \varepsilon$ will have the OOD generalization error lower bounded by 
\begin{equation}
\err(f)\geq C_0\cdot s(\Vcal^{\textnormal{sup}}(h,\Eava)).
\end{equation}
\end{theorem}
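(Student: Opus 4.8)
The statement is a minimax lower bound, so the plan is to \emph{construct} an explicit hard family of OOD problems and to show that even the best admissible classifier on it incurs error at least $C_0 s(\varepsilon)$. I would work with a one-dimensional Gaussian location model: two classes $\Ycal=\{-1,+1\}$ with equal prior, an environment indexed by a shift $m>0$ under which $X\mid Y=y\sim\mathcal N(ym,\sigma^2)$, with $\sigma>0$ fixed and a large constant $m_0$ to be pinned down at the end. For each target level $\varepsilon\in(0,t/2]$ set $a_\varepsilon=\tfrac{\sigma}{2}\sqrt{\varepsilon}$ and $A_\varepsilon=\tfrac{\sigma}{2}\sqrt{s(\varepsilon)}$ (so $A_\varepsilon\ge a_\varepsilon$ since $s(\varepsilon)\ge\varepsilon$), and define the problem $P_\varepsilon$ by $\Eava=\{e_m: m\in[m_0-a_\varepsilon,\,m_0+a_\varepsilon]\}$ and $\Eall=\{e_m: m\in[m_0-A_\varepsilon,\,m_0+A_\varepsilon]\}\supset\Eava$. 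Since the symmetric KL divergence between $\mathcal N(\mu_1,\sigma^2)$ and $\mathcal N(\mu_2,\sigma^2)$ equals $(\mu_1-\mu_2)^2/\sigma^2$, the scalar feature $\phi(x)=x$ then has $\Vcal(\phi,\Eava)=\varepsilon$, $\Vcal(\phi,\Eall)=s(\varepsilon)$, and inter-class separation $\tfrac{(2m)^2}{\sigma^2}\ge \tfrac{4(m_0-A_\varepsilon)^2}{\sigma^2}$ in every environment. The ``$\exists$ a problem, $\forall\varepsilon$'' phrasing I would read as: exhibit the family $\{P_\varepsilon\}$ with a single uniform constant $C_0$.

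\textbf{Learnability.} Next I would check that each $P_\varepsilon$ is $(s(\cdot),\delta)$-learnable under the linear feature space with $\rho=$ symmetric KL. In one dimension the only unit-norm linear features are $\phi$ and $-\phi$, which have identical variation and separation, so it suffices to control $\phi$. Because $s(\varepsilon)\le s(t/2)<\infty$ (the finiteness half of condition 2) and $s$ is monotone, $A_\varepsilon$ is bounded uniformly over $\varepsilon\in(0,t/2]$; choosing $m_0\ge\tfrac{\sigma}{2}\big(\sqrt{\delta}+\sqrt{s(t/2)}\big)$ forces $\Ical_\rho(\phi,\Eava)\ge\delta$ for all such $\varepsilon$. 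For this feature $\Vcal(\phi,\Eall)=s(\varepsilon)=s\big(\Vcal(\phi,\Eava)\big)$, which is exactly the defining inequality (with equality) of $(s(\cdot),\delta)$-learnability.

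\textbf{Error of the optimal classifier.} For every environment $e_m$ in this family the likelihood-ratio (Bayes) rule is ``predict $+1$ iff $x>0$'', so the origin threshold $\hat y(x)=\mathrm{sign}(x)$ is \emph{simultaneously} Bayes-optimal across all $e_m$; it is therefore the optimal classifier built on $\phi$, whose variation is forced to be $\varepsilon$, and its per-environment risk is $R(m)=\Phi(-m/\sigma)$, strictly decreasing in $m$. Hence $\max_{\Eava}R=\Phi\big(-(m_0-a_\varepsilon)/\sigma\big)$, $\max_{\Eall}R=\Phi\big(-(m_0-A_\varepsilon)/\sigma\big)$, and by the mean value theorem $\err(f)=\tfrac12\,\varphi(\zeta)\,\big(\sqrt{s(\varepsilon)}-\sqrt{\varepsilon}\big)$ for some $\zeta$ in the compact interval $[-m_0/\sigma,\,-m_0/\sigma+\tfrac12\sqrt{s(t/2)}]$ (again using $\varepsilon\le t/2$), on which the Gaussian density is bounded below by a constant $c_1>0$ depending only on $m_0,\sigma$. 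The final step turns $\sqrt{s(\varepsilon)}-\sqrt{\varepsilon}$ into a multiple of $s(\varepsilon)$: from $\varepsilon\le s(\varepsilon)/k$ with $k>1$ one gets $\sqrt{\varepsilon}\le\sqrt{s(\varepsilon)}/\sqrt{k}$, hence $\sqrt{s(\varepsilon)}-\sqrt{\varepsilon}\ge(1-k^{-1/2})\sqrt{s(\varepsilon)}\ge(1-k^{-1/2})\,s(\varepsilon)/\sqrt{\max(1,s(t/2))}$; absorbing constants gives $\err(f)\ge C_0\,s(\varepsilon)$ with $C_0>0$ uniform in $\varepsilon$, and $\varepsilon=0$ is trivial since $s(0)=0$.

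\textbf{Main obstacle.} I expect the delicate point to be the \emph{design} of the construction rather than any single estimate: the lower bound can only hold if the learner cannot escape by choosing a different feature with strictly smaller training variation, which is exactly why I restrict to one dimension, where $\pm\phi$ are the only candidates and $\Vcal^{\mathrm{sup}}(h,\Eava)=\varepsilon$ is unavoidable. The one genuinely subtle inequality is the last display, which crucially uses the strict super-linearity $k>1$ (together with finiteness of $s$ on $[0,t]$): for an $s$ that is only weakly super-linear near $0$ the gap $\sqrt{s(\varepsilon)}-\sqrt{\varepsilon}$ would be $o(s(\varepsilon))$ and the conclusion would degrade — which is the structural reason conditions 1--2 appear in the hypothesis.
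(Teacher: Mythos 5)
The paper itself does not prove this theorem; it is restated verbatim from \citet{ye2021towards} for completeness, so there is no internal proof to compare against. Reviewing your proposal on its own merits: the local computations are all correct (the symmetric-KL formula, the variation and separation values, the Bayes risk $\Phi(-m/\sigma)$, the mean-value-theorem step, and the use of $k>1$ and $s\le s(t/2)<\infty$ to turn $\sqrt{s(\varepsilon)}-\sqrt{\varepsilon}$ into a multiple of $s(\varepsilon)$). The gap is the quantifier structure, and it is not a harmless reading choice: the theorem asserts ``there exists a constant $C_0$ and an OOD generalization problem $(\Eava,\Eall)$ \dots\ such that for all $\varepsilon\in[0,t/2]$, the optimal classifier $f$ with $\Vcal^{\textnormal{sup}}(h,\Eava)=\varepsilon$ has $\err(f)\ge C_0\, s(\varepsilon)$.'' That is a single fixed hard instance in which, as the \emph{classifier's feature} ranges over variation levels, the error tracks $s$; this is what makes the bound a tightness statement complementing Theorem~\ref{general ood bound full}. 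You instead build a $\varepsilon$-indexed \emph{family} of problems $\{P_\varepsilon\}$, which proves the weaker ``$\forall\varepsilon\ \exists$ problem'' statement.

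Moreover, your own construction cannot be promoted to the required form. You observe, correctly, that under symmetric KL the variation of a linear feature $\phi(x)=cx$ in a one-dimensional Gaussian location model is scale-invariant, hence every nontrivial linear $h$ has the \emph{same} variation $\Vcal^{\textnormal{sup}}(h,\Eava)$ within a fixed $P_\varepsilon$. That rigidity is precisely what pins the variation to a single value, so for any fixed problem the hypothesis ``$\Vcal^{\textnormal{sup}}(h,\Eava)=\varepsilon'$'' is vacuous for all but one $\varepsilon'$, and the ``$\forall\varepsilon$'' clause has no content. Proving the theorem as stated requires a higher-dimensional construction in which different linear directions realize different variation levels across $[0,t/2]$ (e.g.\ one coordinate carrying the cross-environment shift, another stable, so that rotating $\beta$ trades off variation against informativeness), and then showing that every such choice with variation $\varepsilon$ incurs error $\ge C_0 s(\varepsilon)$. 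Your one-dimensional restriction, which you introduced precisely to prevent the learner from escaping to a lower-variation feature, is exactly what blocks the needed flexibility; the fix is not another estimate but a different construction.
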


\section{Additional Experimental Details}
\label{sec:expdetails}

\paragraph{Software and hardware.} Our method is implemented with PyTorch 1.10. All experiments are conducted on NVIDIA GeForce RTX 2080 Ti GPUs for small to medium batch sizes and NVIDIA A100 and RTX A6000 GPUs for large batch sizes.

\paragraph{Architecture.} In our experiments, we use ResNet-18 for CIFAR-10, ResNet-34 for ImageNet-100, ResNet-50 for PACS, VLCS, Office-Home and Terra Incognita. Following common practice in prior works~\citep{2020supcon}, we use a non-linear MLP projection head to obtain features in our experiments. The embedding dimension is 128 of the projection head for ImageNet-100. The projection head dimension is 512  for PACS, VLCS, Office-Home, and Terra Incognita.

\paragraph{Additional implementation details.}
In our experiments, we follow the common practice that initializing the network with ImageNet pre-trained weights for PACS, VLCS, Office-Home, and Terra Incognita. We then fine-tune the network for 50 epochs. For the large-scale experiments on ImageNet-100, we fine-tune ImageNet pre-trained ResNet-34 with our method for 10 epochs for computational efficiency. We set the temperature $\tau=0.1$, prototype update factor $\alpha=0.95$ as the default value. We use stochastic gradient descent with momentum $0.9$, and weight decay $10^{-4}$. The search distribution in our experiments for the learning rate hyperparameter is: $\textnormal{lr} \in \{0.005, 0.002, 0.001, 0.0005, 0.0002, 0.0001, 0.00005\}$. The search space for the batch size is $\textnormal{bs} \in \{32, 64\}$. The loss weight $\lambda$ for balancing our loss function ($\mathcal{L}=\lambda\mathcal{L}_\text{var}+\mathcal{L}_\text{sep}$) is selected from $\lambda \in \{1.0, 2.0, 4.0\}$. For multi-source domain generalization, hard negatives can be incorporated by a simple modification to the denominator of the variation loss:

\begin{equation}
     \label{eq:hard_neg}
   \mathcal{L}_\text{var} =  - \frac{1}{N} \sum_{e \in \mathcal{E}_\text{avail}} \sum_{i=1}^{|\mathcal{D}^e|} \log \frac{\exp \left(\*z_{i}^\top  {\boldsymbol{\mu}}_{c(i)} / \tau\right)}{\sum_{j=1}^{C} \exp \left(\*z_{i}^\top  {\boldsymbol{\mu}}_{j} / \tau \right) + \sum_{\substack{j=1} }^N  \mathbb{I}(y_j\neq y_i,e_i = e_j) \exp{\left({\*z}_i^\top  {\*z}_j/\tau\right)} }
\end{equation}

\paragraph{Details of datasets.}
We provide a detailed description of the datasets used in this work:

\textbf{CIFAR-10} \citep{krizhevsky2009learning} is consist of $60,000$ color images with 10 classes. The training set has $50,000$ images and the test set has
$10,000$ images.

\textbf{ImageNet-100} is composed by randomly sampled 100 categories from ImageNet-1K.%~\cite{deng2009imagenet}. 
This dataset contains the following classes: {\small n01498041, n01514859, n01582220, n01608432, n01616318, n01687978, n01776313, n01806567, n01833805, n01882714, n01910747, n01944390, n01985128, n02007558, n02071294, n02085620, n02114855, n02123045, n02128385, n02129165, n02129604, n02165456, n02190166, n02219486, n02226429, n02279972, n02317335, n02326432, n02342885, n02363005, n02391049, n02395406, n02403003, n02422699, n02442845, n02444819, n02480855, n02510455, n02640242, n02672831, n02687172, n02701002, n02730930, n02769748, n02782093, n02787622, n02793495, n02799071, n02802426, n02814860, n02840245, n02906734, n02948072, n02980441, n02999410, n03014705, n03028079, n03032252, n03125729, n03160309, n03179701, n03220513, n03249569, n03291819, n03384352, n03388043, n03450230, n03481172, n03594734, n03594945, n03627232, n03642806, n03649909, n03661043, n03676483, n03724870, n03733281, n03759954, n03761084, n03773504, n03804744, n03916031, n03938244, n04004767, n04026417, n04090263, n04133789, n04153751, n04296562, n04330267, n04371774, n04404412, n04465501, n04485082, n04507155, n04536866, n04579432, n04606251, n07714990, n07745940}.

\textbf{CIFAR-10-C} is generated based on the previous literature~\citep{hendrycks2018benchmarking}, applying different corruptions on CIFAR-10 data. The corruption types include gaussian noise, zoom blur, impulse noise, defocus blur, snow, brightness, contrast, elastic transform, fog, frost, gaussian blur, glass blur, JEPG compression, motion blur, pixelate, saturate, shot noise, spatter, and speckle noise.

\textbf{ImageNet-100-C} is algorithmically generated with Gaussian noise based on \citep{hendrycks2018benchmarking} for the ImageNet-100 dataset.%~\cite{deng2009imagenet}.

\textbf{PACS}~\citep{li2017deeper} is commonly used in OoD generalization.
This dataset contains $9,991$ examples of resolution $224 \times 224$ and four domains with different image styles, namely photo, art painting, cartoon, and sketch with seven categories.

\textbf{VLCS}~\citep{gulrajani2020search} comprises four domains including Caltech101, LabelMe, SUN09, and VOC2007. It contains $10,729$ examples of resolution $224 \times 224$ and 5 classes.

\textbf{Office-Home}~\citep{gulrajani2020search} contains four different domains: art, clipart, product, and real. This dataset comprises $15,588$ examples of resolution $224 \times 224$ and 65 classes.

\textbf{Terra Incognita}~\citep{gulrajani2020search} comprises images of wild animals taken by cameras at four different locations: location100, location38, location43, and location46. This dataset contains $24,788$ examples of
resolution $224 \times 224$ and 10 classes.

\section{Detailed Results on CIFAR-10}
\label{sec:c10_shift}

In this section, we provide complete results of the different corruption types on CIFAR-10. In Table~\ref{tab:fullcorruption}, we evaluate HYPO under various common corruptions. Results suggest that HYPO achieves consistent improvement over the ERM baseline for all 19 different corruptions. %{\color{BlueBG} 
We also compare our loss (HYPO) with more recent competitive algorithms: EQRM~\citep{eastwood2022probable} and SharpDRO~\citep{huang2023robust}, on the CIFAR10-C dataset (Gaussian noise). The results on ResNet-18 are presented in Table~\ref{tab:cifar10c}.
%}

\begin{table*}[h]
\centering
\caption{Main results for verifying OOD generalization performance on the 19 different covariate shifts datasets. We train on CIFAR-10 as ID, using CIFAR-10-C as the OOD test dataset. Acc. denotes the accuracy on the OOD test set.}
\scalebox{0.79}{
\begin{tabular}{l|cc|cc|cc|cc}
\toprule
\textbf{Method} & \textbf{Corruptions}
& \textbf{Acc.} & \textbf{Corruptions}
& \textbf{Acc.} & \textbf{Corruptions}
& \textbf{Acc.} & \textbf{Corruptions}
& \textbf{Acc.}\\
\midrule
\textbf{CE} & Gaussian noise & 78.09 & Zoom blur & 88.47 & Impulse noise & 83.60 & Defocus blur & 94.85 \\
\textbf{HYPO (Ours)} & Gaussian noise & 85.21 & Zoom blur & 93.28 & Impulse noise & 87.54 & Defocus blur & 94.90 \\
\midrule
\textbf{CE} & Snow & 90.19 & Brightness & 94.83 & Contrast & 94.11 & Elastic transform & 90.36 \\
\textbf{HYPO (Ours)} & Snow & 91.10 & Brightness & 94.87 & Contrast & 94.53 & Elastic transform & 91.64 \\
\midrule
\textbf{CE} & Fog & 94.45 & Frost & 90.33 & Gaussian blur & 94.85 & Glass blur & 56.99 \\
\textbf{HYPO (Ours)} & Fog & 94.57 & Frost & 92.28 & Gaussian blur & 94.91 & Glass blur & 63.66 \\
\midrule
\textbf{CE} & JEPG compression & 86.95 & Motion blur & 90.69 & Pixelate & 92.67 & Saturate & 92.86 \\
\textbf{HYPO (Ours)} & JEPG compression & 89.24 & Motion blur & 93.07 & Pixelate & 93.95 & Saturate & 93.66 \\
\midrule
\textbf{CE} & Shot noise & 85.86 & Spatter & 92.20 & Speckle noise & 85.66 & \textbf{Average} & 88.32 \\
\textbf{HYPO (Ours)} & Shot noise & 89.87 & Spatter & 92.46 & Speckle noise & 89.94 & \textbf{Average} & \textbf{90.56} \\
\bottomrule
\end{tabular}%
}
\label{tab:fullcorruption}%
\end{table*}%

\begin{table*}[t]
\centering
\scalebox{0.73}{\begin{tabular}{lccccc}
\toprule
\textbf{Algorithm}  & \textbf{Art painting} & \textbf{Cartoon} & \textbf{Photo} & \textbf{Sketch} & \textbf{Average Acc. (\%)}\\
\midrule
\textbf{IRM}~\citep{arjovsky2019invariant}  & 84.8 & 76.4 & 96.7 & 76.1 & 83.5  \\
\textbf{DANN}~\citep{ganin2016domain}  & 86.4 & 77.4 & 97.3 & 73.5 & 83.7   \\
\textbf{CDANN}~\citep{li2018deep}  & 84.6 & 75.5 & 96.8 & 73.5 & 82.6  \\
\textbf{GroupDRO}~\citep{sagawa2020distributionally}  & 83.5 & 79.1 & 96.7 & 78.3 & 84.4 \\
\textbf{MTL}~\citep{blanchard2021domain}  & 87.5 & 77.1 & 96.4 & 77.3 & 84.6 \\
\textbf{I-Mixup}~\citep{wang2020heterogeneous,xu2020adversarial,yan2020improve} & 86.1 & 78.9 & 97.6 & 75.8 & 84.6  \\
\textbf{MMD}~\citep{li2018domain}   & 86.1 & 79.4 & 96.6 & 76.5 & 84.7  \\
\textbf{VREx}~\citep{krueger2021out}   & 86.0 & 79.1 & 96.9 & 77.7 & 84.9  \\
\textbf{MLDG}~\citep{li2018learning}  & 85.5 & 80.1 & 97.4 & 76.6 & 84.9  \\
\textbf{ARM}~\citep{zhang2020adaptive} & 86.8 & 76.8 & 97.4 & 79.3 & 85.1 \\
\textbf{RSC}~\citep{huang2020self} & 85.4 & 79.7 & 97.6 & 78.2 & 85.2 \\
\textbf{Mixstyle}~\citep{zhou2021domain}  & 86.8 & 79.0 & 96.6 & 78.5 & 85.2 \\
\textbf{ERM}~\citep{vapnik1999overview}  & 84.7 & 80.8 & 97.2 & 79.3 & 85.5  \\
\textbf{CORAL}~\citep{sun2016deep}  & 88.3 & 80.0 & 97.5 & 78.8 & 86.2 \\
\textbf{SagNet}~\citep{nam2021reducing} & 87.4 & 80.7 & 97.1 & 80.0 & 86.3  \\
\textbf{SelfReg}~\citep{kim2021selfreg} & 87.9 & 79.4 & 96.8 & 78.3 & 85.6 \\
\textbf{GVRT}~\cite{min2022grounding} & 87.9 & 78.4 & 98.2 & 75.7 & 85.1 \\
\textbf{VNE}~\citep{kim2023vne} & 88.6 & 79.9 & 96.7 & 82.3 & 86.9 \\
\textbf{HYPO (Ours)} &   87.2 &  82.3  &  98.0 & 84.5  &  \textbf{88.0}  \\
\bottomrule
\end{tabular}}         
\caption[]{\small Comparison with state-of-the-art methods on the PACS benchmark. All methods are trained on ResNet-50. The model selection is based on a training domain validation set. {To isolate the effect of loss functions, all methods are optimized using standard SGD}. \textcolor{black}{*Results based on retraining of PCL with SGD using official implementation. PCL with SWAD optimization is further compared in Table~\ref{tab:pcl-swad}}. {\color{black}We run HYPO 3 times and report the average and std. $\pm x$ denotes the standard error, rounded to the first decimal point.}
}
\label{tab:pacs}
\end{table*}

\begin{table*}[ht]
\centering
\scalebox{0.72}{\begin{tabular}{lccccc}
\toprule
\textbf{Algorithm}  & \textbf{Art} & \textbf{Clipart} & \textbf{Product} & \textbf{Real World} & \textbf{Average Acc. (\%)}\\
\midrule
\textbf{IRM}~\citep{arjovsky2019invariant}  & 58.9 & 52.2 & 72.1 & 74.0 & 64.3  \\
\textbf{DANN}~\citep{ganin2016domain}  & 59.9 & 53.0 & 73.6 & 76.9 & 65.9   \\
\textbf{CDANN}~\citep{li2018deep}  & 61.5 & 50.4 & 74.4 & 76.6 & 65.7  \\
\textbf{GroupDRO}~\citep{sagawa2020distributionally}  & 60.4 & 52.7 & 75.0 & 76.0 & 66.0 \\
\textbf{MTL}~\citep{blanchard2021domain}  & 61.5 & 52.4 & 74.9 & 76.8 & 66.4 \\
\textbf{I-Mixup}~\citep{wang2020heterogeneous,xu2020adversarial,yan2020improve} & 62.4 & 54.8 & 76.9 & 78.3 & 68.1  \\
\textbf{MMD}~\citep{li2018domain}   & 60.4 & 53.3 & 74.3 & 77.4 & 66.4  \\
\textbf{VREx}~\citep{krueger2021out}   & 60.7 & 53.0 & 75.3 & 76.6 & 66.4  \\
\textbf{MLDG}~\citep{li2018learning}  & 61.5 & 53.2 & 75.0 & 77.5 & 66.8  \\
\textbf{ARM}~\citep{zhang2020adaptive} & 58.9 & 51.0 & 74.1 & 75.2 & 64.8 \\
\textbf{RSC}~\citep{huang2020self} & 60.7 & 51.4 & 74.8 & 75.1 & 65.5 \\
\textbf{Mixstyle}~\citep{zhou2021domain}  & 51.1 & 53.2 & 68.2 & 69.2 & 60.4 \\
\textbf{ERM}~\citep{vapnik1999overview}  & 63.1 & 51.9 & 77.2 & 78.1 & 67.6  \\
\textbf{CORAL}~\citep{sun2016deep}  & 65.3 & 54.4 & 76.5 & 78.4 & 68.7 \\
\textbf{SagNet}~\citep{nam2021reducing} & 63.4 & 54.8 & 75.8 & 78.3 & 68.1  \\
\textbf{SelfReg}~\citep{kim2021selfreg} & 63.6 & 53.1 & 76.9 & 78.1 & 67.9\\
\textbf{GVRT}~\cite{min2022grounding} & 66.3 & 55.8 & 78.2 & 80.4 & 70.1 \\
\textbf{VNE}~\citep{kim2023vne} & 60.4 & 54.7 & 73.7 & 74.7 & 65.9 \\
\textbf{HYPO (Ours)}  & 68.3 & 57.9 & 79.0 & 81.4  &  \textbf{71.7}  \\
\bottomrule
\end{tabular}}         
\caption[]{Comparison with state-of-the-art methods on the Office-Home benchmark. All methods are trained on ResNet-50. The model selection is based on a training domain validation set. To isolate the effect of loss functions, all methods are optimized using standard SGD.
}
%\vspace{-0.6cm}
\label{tab:officehome}
\end{table*}

\begin{table*}[ht]
\centering
\scalebox{0.7}{\begin{tabular}{lccccc}
\toprule
\textbf{Algorithm}  & \textbf{Caltech101} & \textbf{LabelMe} & \textbf{SUN09} & \textbf{VOC2007} & \textbf{Average Acc. (\%)}\\
\midrule
\textbf{IRM}~\citep{arjovsky2019invariant}  & 98.6 & 64.9 & 73.4 & 77.3 & 78.6 \\
\textbf{DANN}~\citep{ganin2016domain}  & 99.0 & 65.1 & 73.1 & 77.2 & 78.6   \\
\textbf{CDANN}~\citep{li2018deep}  & 97.1 & 65.1 & 70.7 & 77.1 & 77.5  \\
\textbf{GroupDRO}~\citep{sagawa2020distributionally}  & 97.3 & 63.4 & 69.5 & 76.7 & 76.7 \\
\textbf{MTL}~\citep{blanchard2021domain}  & 97.8 & 64.3 & 71.5 & 75.3 & 77.2 \\
\textbf{I-Mixup}~\citep{wang2020heterogeneous,xu2020adversarial,yan2020improve} & 98.3 & 64.8 & 72.1 & 74.3 & 77.4  \\
\textbf{MMD}~\citep{li2018domain}   & 97.7 & 64.0 & 72.8 & 75.3 & 77.5  \\
\textbf{VREx}~\citep{krueger2021out}   & 98.4 & 64.4 & 74.1 & 76.2 & 78.3  \\
\textbf{MLDG}~\citep{li2018learning}  & 97.4 & 65.2 & 71.0 & 75.3 & 77.2  \\
\textbf{ARM}~\citep{zhang2020adaptive} & 98.7 & 63.6 & 71.3 & 76.7 & 77.6 \\
\textbf{RSC}~\citep{huang2020self} & 97.9 & 62.5 & 72.3 & 75.6 & 77.1 \\
\textbf{Mixstyle}~\citep{zhou2021domain}  & 98.6 & 64.5 & 72.6 & 75.7 & 77.9 \\
\textbf{ERM}~\citep{vapnik1999overview}  & 97.7 & 64.3 & 73.4 & 74.6 & 77.5  \\
\textbf{CORAL}~\citep{sun2016deep}  & 98.3 & 66.1 & 73.4 & 77.5 & 78.8 \\
\textbf{SagNet}~\citep{nam2021reducing} & 97.9 & 64.5 & 71.4 & 77.5 & 77.8  \\
\textbf{SelfReg}~\citep{kim2021selfreg} & 96.7 & 65.2 & 73.1 & 76.2 & 77.8 \\
\textbf{GVRT}~\cite{min2022grounding} & 98.8 & 64.0 & 75.2 & 77.9 & 79.0 \\
\textbf{VNE}~\citep{kim2023vne} & 97.5 & 65.9 & 70.4 & 78.4 & 78.1 \\
\textbf{HYPO (Ours)}  & 98.1 & 65.3 & 73.1 & 76.3 & 78.2  \\
\bottomrule
\end{tabular}}         
%\vspace{-0.15cm}
\caption[]{Comparison with state-of-the-art methods on the VLCS benchmark. All methods are trained on ResNet-50. The model selection is based on a training domain validation set. To isolate the effect of loss functions, all methods are optimized using standard SGD.
}
%\vspace{-0.6cm}
\label{tab:vlcs}
\end{table*}

\begin{table*}[ht]
\centering
\scalebox{0.66}{\begin{tabular}{lccccc}
\toprule
\textbf{Algorithm}  & \textbf{Location100} & \textbf{Location38} & \textbf{Location43} & \textbf{Location46} & \textbf{Average Acc. (\%)}\\
\midrule
\textbf{IRM}~\citep{arjovsky2019invariant}  & 54.6 & 39.8 & 56.2 & 39.6 & 47.6 \\
\textbf{DANN}~\citep{ganin2016domain}  & 51.1 & 40.6 & 57.4 & 37.7 & 46.7 \\
\textbf{CDANN}~\citep{li2018deep} & 47.0 & 41.3 & 54.9 & 39.8 & 45.8 \\
\textbf{GroupDRO}~\citep{sagawa2020distributionally}  & 41.2 & 38.6 & 56.7 & 36.4 & 43.2 \\
\textbf{MTL}~\citep{blanchard2021domain} & 49.3 & 39.6 & 55.6 & 37.8 & 45.6 \\
\textbf{I-Mixup}~\citep{wang2020heterogeneous,xu2020adversarial,yan2020improve} & 59.6 & 42.2 & 55.9 & 33.9 & 47.9 \\
\textbf{MMD}~\citep{li2018domain}  & 41.9 & 34.8 & 57.0 & 35.2 & 42.2 \\
\textbf{VREx}~\citep{krueger2021out}  & 48.2 & 41.7 & 56.8 & 38.7 & 46.4 \\
\textbf{MLDG}~\citep{li2018learning} & 54.2 & 44.3 & 55.6 & 36.9 & 47.8 \\
\textbf{ARM}~\citep{zhang2020adaptive} & 49.3 & 38.3 & 55.8 & 38.7 & 45.5 \\
\textbf{RSC}~\citep{huang2020self} & 50.2 & 39.2 & 56.3 & 40.8 & 46.6 \\
\textbf{Mixstyle}~\citep{zhou2021domain} & 54.3 & 34.1 & 55.9 & 31.7 & 44.0 \\
\textbf{ERM}~\citep{vapnik1999overview}  & 49.8 & 42.1 & 56.9 & 35.7 & 46.1 \\
\textbf{CORAL}~\citep{sun2016deep}  & 51.6 & 42.2 & 57.0 & 39.8 & 47.7 \\
\textbf{SagNet}~\citep{nam2021reducing} & 53.0 & 43.0 & 57.9 & 40.4 & 48.6 \\
\textbf{SelfReg}~\citep{kim2021selfreg} & 48.8 & 41.3 & 57.3 & 40.6 & 47.0 \\
\textbf{GVRT}~\cite{min2022grounding} & 53.9 & 41.8 & 58.2 & 38.0 & 48.0 \\
\textbf{VNE}~\citep{kim2023vne} & 58.1 & 42.9 & 58.1 & 43.5 & 50.6 \\
\textbf{HYPO (Ours)} & 58.8 & 46.6 & 58.7 & 42.7 & \textbf{51.7}  \\
\bottomrule
\end{tabular}}         
%\vspace{-0.15cm}
\caption[]{Comparison with state-of-the-art methods on the Terra Incognita benchmark. All methods are trained on ResNet-50. The model selection is based on a training domain validation set. To isolate the effect of loss functions, all methods are optimized using standard SGD.
}
%\vspace{-0.6cm}
\label{tab:terra}
\end{table*}

\begin{table*}[!h]
\centering
\scalebox{0.85}{\begin{tabular}{lccccc}
\toprule
\textbf{Algorithm}  & \textbf{Art} & \textbf{Clipart} & \textbf{Product} & \textbf{Real World} & \textbf{Average Acc. (\%)}\\
\midrule
\textbf{SWAD}~\citep{cha2021swad}    & 66.1 & 57.7 & 78.4 & 80.2 & 70.6 \\
\textbf{PCL+SWAD}~\citep{yao2022pcl} & 67.3 & 59.9 & 78.7 & 80.7 & 71.6 \\
\textbf{VNE+SWAD}~\citep{kim2023vne} & 66.6 & 58.6 & 78.9 & 80.5 & 71.1 \\
\textbf{HYPO+SWAD (Ours)} & 68.4 & 61.3 & 81.8 & 82.4 & \textbf{73.5} \\
\bottomrule
\end{tabular}}         
%\vspace{-0.15cm}
\caption[]{Results with SWAD-based optimization on the Office-Home benchmark. 
}
%\vspace{-0.5cm}
\label{tab:officehome-swad}
\end{table*}

\begin{table*}[!h]
\centering
\scalebox{0.85}{\begin{tabular}{lccccc}
\toprule
\textbf{Algorithm}  & \textbf{Caltech101} & \textbf{LabelMe} & \textbf{SUN09} & \textbf{VOC2007} & \textbf{Average Acc. (\%)}\\
\midrule
\textbf{SWAD}~\citep{cha2021swad}    & 98.8 & 63.3 & 75.3 & 79.2 & 79.1 \\
\textbf{PCL+SWAD}~\citep{yao2022pcl} & 95.8 & 65.4 & 74.3 & 76.2 & 77.9 \\
\textbf{VNE+SWAD}~\citep{kim2023vne} & 99.2 & 63.7 & 74.4 & 81.6 & 79.7 \\
\textbf{HYPO+SWAD (Ours)} & 98.9 & 67.8 & 74.3 & 77.7 & \textbf{79.7} \\
\bottomrule
\end{tabular}}         
%\vspace{-0.15cm}
\caption[]{Rresults with SWAD-based optimization on the VLCS benchmark. 
}
%\vspace{-0.5cm}
\label{tab:vlcs-swad}
\end{table*}

\begin{table*}[h]
\centering
\scalebox{0.8}{\begin{tabular}{lccccc}
\toprule
\textbf{Algorithm}  & \textbf{Location100} & \textbf{Location38} & \textbf{Location43} & \textbf{Location46} & \textbf{Average Acc. (\%)}\\
\midrule
\textbf{SWAD}~\citep{cha2021swad}    & 55.4 & 44.9 & 59.7 & 39.9 & 50.0 \\
\textbf{PCL+SWAD}~\citep{yao2022pcl} & 58.7 & 46.3 & 60.0 & 43.6 & 52.1 \\
\textbf{VNE+SWAD}~\citep{kim2023vne} & 59.9 & 45.5 & 59.6 & 41.9 & 51.7 \\
\textbf{HYPO+SWAD (Ours)} & 56.8 & 61.3 & 54.0 & 53.2 & \textbf{56.3} \\
\bottomrule
\end{tabular}}         
%\vspace{-0.15cm}
\caption[]{Results with SWAD-based optimization on the Terra Incognita benchmark. 
}
%\vspace{-0.5cm}
\label{tab:terra-swad}
\end{table*}

\section{Additional Evaluations on Other OOD Generalization Tasks}\label{sec:otherood}

In this section, we provide detailed results on more OOD generalization benchmarks, including Office-Home (Table~\ref{tab:officehome}), VLCS (Table~\ref{tab:vlcs}), and Terra Incognita (Table~\ref{tab:terra}). We observe that our approach achieves strong performance on these benchmarks. We compare our method with a collection of OOD generalization baselines such as
\texttt{IRM}~\citep{arjovsky2019invariant}, 
\texttt{DANN}~\citep{ganin2016domain}, 
\texttt{CDANN}~\citep{li2018deep}, 
\texttt{GroupDRO}~\citep{sagawa2020distributionally},
\texttt{MTL}~\citep{blanchard2021domain},
\texttt{I-Mixup}~\citep{zhang2018mixup}, 
\texttt{MMD}~\citep{li2018domain}, 
\texttt{VREx}~\citep{krueger2021out}, 
\texttt{MLDG}~\citep{li2018learning}, 
\texttt{ARM}~\citep{zhang2020adaptive}, 
\texttt{RSC}~\citep{huang2020self},
\texttt{Mixstyle}~\citep{zhou2021domain},
\texttt{ERM}~\citep{vapnik1999overview}, 
\texttt{CORAL}~\citep{sun2016deep}, 
\texttt{SagNet}~\citep{nam2021reducing},
\texttt{SelfReg}~\citep{kim2021selfreg},
% \texttt{mDSDI}~\citep{bui2021exploiting},
\texttt{GVRT}~\cite{min2022grounding},
\texttt{VNE}~\citep{kim2023vne}. These methods are all loss-based and optimized using standard SGD.  On the Office-Home, our method achieves an improved OOD generalization performance of \textbf{1.6}\% compared to a competitive baseline~\citep{sun2016deep}.

We also conduct experiments coupling with SWAD and achieve superior performance on OOD generalization. As shown in Table~\ref{tab:officehome-swad}, Table~\ref{tab:vlcs-swad}, Table~\ref{tab:terra-swad}, our method consistently establish superior results on different benchmarks including VLCS, Office-Home, Terra Incognita, showing the effectiveness of our method via hyperspherical learning.

\newpage

\section{Experiments on ImageNet-100 and ImageNet-100-C}\label{sec:imagenet}

In this section, we provide additional large-scale results on the ImageNet benchmark. We use ImageNet-100 as the in-distribution data and use ImageNet-100-C with Gaussian noise as OOD data in the experiments. In Figure~\ref{fig:imagenet-100}, we observe our method improves OOD accuracy compared to the ERM baseline.

\begin{wrapfigure}[9]{r}{0.4\textwidth}
    \vspace{-1cm}
    \centering
    \includegraphics[width=0.38\textwidth]{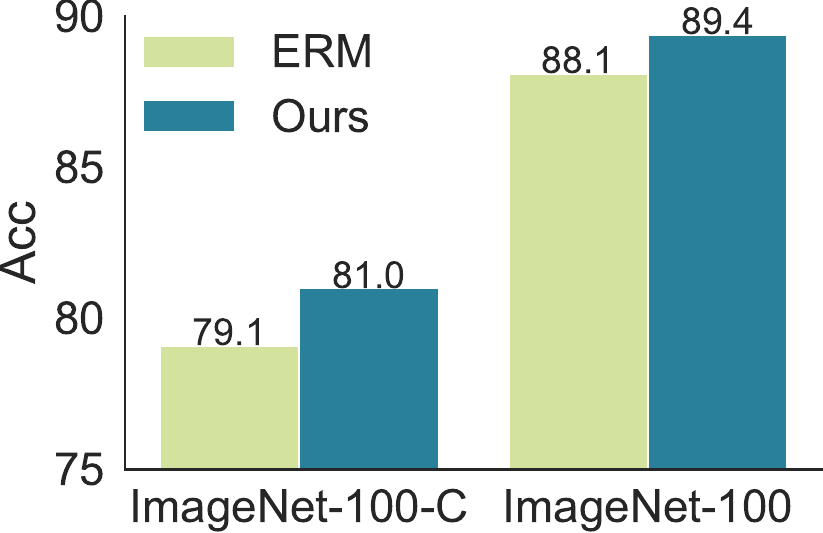}
    \vspace{-0.2cm}
    \caption{Experiments on ImageNet-100 (ID) vs. ImageNet-100-C (OOD).}
    \label{fig:imagenet-100}
\end{wrapfigure}

\section{Ablation of Different Loss Terms}\label{sec:ablationloss}

\paragraph{Ablations on separation loss.} In Table~\ref{tab:ablation-sep}, we demonstrate the effectiveness of the first loss term (variation) empirically. We compare the OOD performance of our method (with separation loss) vs. our method (without separation loss). We observe our method without separation loss term can still achieve strong OOD accuracy--average $87.2\%$ on the PACS dataset. This ablation study indicates the first term (variation) of our method plays a more important role in practice, which aligns with our theoretical analysis in Section~\ref{sec:theory} and Appendix~\ref{sec:proof}.

\begin{table*}[ht]
\centering
\scalebox{0.85}{\begin{tabular}{lccccc}
\toprule
\textbf{Algorithm}  & \textbf{Art painting} & \textbf{Cartoon} & \textbf{Photo} & \textbf{Sketch} & \textbf{Average Acc. (\%)}\\
\midrule
\textbf{Ours (w/o separation loss)} & 86.2 & 81.2 & 97.8 & 83.6 & 87.2 \\
\textbf{Ours (w separation loss)} & 87.2 & 82.3 & 98.0 & 84.5 & \textbf{88.0} \\
\bottomrule
\end{tabular}}         
%\vspace{-0.15cm}
\caption[]{Ablations on separation loss term. 
}
%\vspace{-0.5cm}
\label{tab:ablation-sep}
\end{table*}

\paragraph{Ablations on hard negative pairs.} To verify that hard negative pairs help multiple training domains, we conduct ablation by comparing ours (with hard negative pairs) vs. ours (without hard negative pairs). We can see in Table~\ref{tab:hard-negative} that our method with hard negative pairs improves the average OOD performance by $0.4\%$ on the PACS dataset. Therefore, we empirically demonstrate that emphasizing hard negative pairs leads to better performance for multi-source domain generalization tasks.

\begin{table*}[ht]
\centering
\scalebox{0.85}{\begin{tabular}{lccccc}
\toprule
\textbf{Algorithm}  & \textbf{Art painting} & \textbf{Cartoon} & \textbf{Photo} & \textbf{Sketch} & \textbf{Average Acc. (\%)}\\
\midrule
\textbf{Ours (w/o hard negative pairs)} & 87.8  & 82.9  & 98.2  & 81.4  &  87.6  \\
\textbf{Ours (w hard negative pairs)} & 87.2 & 82.3 & 98.0 & 84.5 & \textbf{88.0} \\
\bottomrule
\end{tabular}}         
%\vspace{-0.15cm}
\caption[]{Ablation on hard negative pairs. OOD generalization performance on the PACS dataset. 
}
%\vspace{-0.5cm}
\label{tab:hard-negative}
\end{table*}

\paragraph{Comparing EMA update and learnable prototype.}
We conduct an ablation study on the prototype update rule. Specifically, we compare our method with exponential-moving-average (EMA)~\citep{li2020mopro, wang2022contrastive, ming2023cider} prototype update versus learnable prototypes (LP). The results on PACS are summarized in Table~\ref{tab:ema-lp}. We observe our method with EMA achieves better average OOD accuracy $88.0\%$ compared to learnable prototype update rules $86.7\%$. We empirically verify EMA-style method is a suitable prototype updating rule to facilitate gradient-based prototype update in practice.

\begin{table*}[ht]
\centering
\scalebox{0.85}{\begin{tabular}{lccccc}
\toprule
\textbf{Algorithm}  & \textbf{Art painting} & \textbf{Cartoon} & \textbf{Photo} & \textbf{Sketch} & \textbf{Average Acc. (\%)}\\
\midrule
\textbf{Ours (LP)} & 88.0 & 80.7 & 97.5 & 80.7 & 86.7 \\
\textbf{Ours (EMA)}  & 87.2 & 82.3 & 98.0 & 84.5 & \textbf{88.0} \\
\bottomrule
\end{tabular}}         
%\vspace{-0.15cm}
\caption[]{Ablation on prototype update rules. Comparing EMA update and learnable prototype (LP) on the PACS benchmark. 
}
%\vspace{-0.5cm}
\label{tab:ema-lp}
\end{table*}

\paragraph{Quantitative verification of the $\epsilon$ factor in Theorem~\ref{thm:main_theorem}.}

We calculate the average intra-class variation over data from all environments $\frac{1}{N} \sum_{j=1}^N \boldsymbol{\mu}_{c(j)}^{\top} \*z_j$ (Theorem~\ref{thm:main_theorem}) models trained with HYPO. Then we obtain $\hat{\epsilon} := 1 - \frac{1}{N} \sum_{j=1}^N \boldsymbol{\mu}_{c(j)}^{\top} \*z_j$. We evaluated PACS, VLCS, and OfficeHome and summarized the results in Table~\ref{tab:quan}. We observe that training with HYPO significantly reduces the average intra-class variation, resulting in a small epsilon ($\hat{\epsilon} < 0.1$) in practice. This suggests that the first term $O(\epsilon^{1\over 3})$ in Theorem~\ref{thm:main_theorem} is indeed small for models trained with HYPO.

\begin{table*}[ht]
\centering
\scalebox{0.85}{\begin{tabular}{lc}
\toprule
\textbf{Dataset}  & $\hat{\epsilon}$ \\
\midrule
\textbf{PACS} & 0.06 \\
\textbf{VLCS}  & 0.08 \\
\textbf{OfficeHome}  & 0.09 \\
\bottomrule
\end{tabular}}         
\caption[]{%\color{BlueBG}
Empirical verification of intra-class variation in Theorem~\ref{thm:main_theorem}.
}
\label{tab:quan}
\end{table*}

\begin{table*}[ht]
\centering
\scalebox{0.85}{\begin{tabular}{lc}
\toprule
\textbf{Method}  & OOD Acc. ($\%$) \\
\midrule
\textbf{EQRM}~\citep{eastwood2022probable}  & 77.06 \\
\textbf{SharpDRO}~\citep{huang2023robust} & 81.61 \\
\textbf{HYPO (ours)}  & 85.21 \\
\bottomrule
\end{tabular}}         
\caption[]{
Comparison with more recent competitive baselines. Models are trained on CIFAR-10 using ResNet-18 and tested on CIFAR10-C (Gaussian noise).
}
\label{tab:cifar10c}
\end{table*}

\section{Analyzing the Effect of $\tau$ and $\alpha$}\label{sec:hyperparameter}

In Figure~\ref{fig:alpha}, we present the OOD generalization performance by adjusting the prototype update factor $\alpha$. The results are averaged over four domains on the PACS dataset. We observe the generalization performance is competitive across a wide range of $\alpha$. In particular, our method achieves the best performance when $\alpha=0.95$ on the PACS dataset with an average of $88.0\%$ OOD accuracy.

We show in Figure~\ref{fig:tau} the OOD generalization performance by varying the temperature parameter $\tau$. The results are averaged over four different domains on PACS. We observe a relative smaller $\tau$ results in stronger OOD performance while too large $\tau$ (e.g., $0.9$) would lead to degraded performance.

\begin{figure*}[!ht]
\centering
     \begin{subfigure}[b]{0.36\textwidth}
         \centering
         \includegraphics[width=\textwidth]{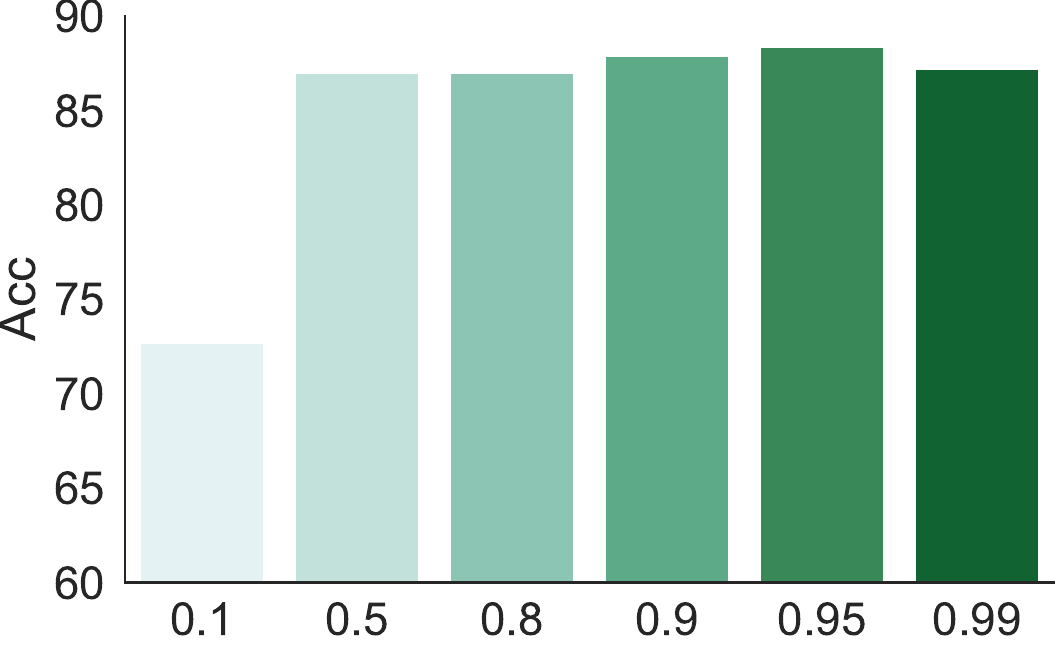}
         \caption{Moving average $\alpha$}
         \label{fig:alpha}
     \end{subfigure}
     \hspace{9mm}
     \begin{subfigure}[b]{0.36\textwidth}
         \centering
         \includegraphics[width=\textwidth]{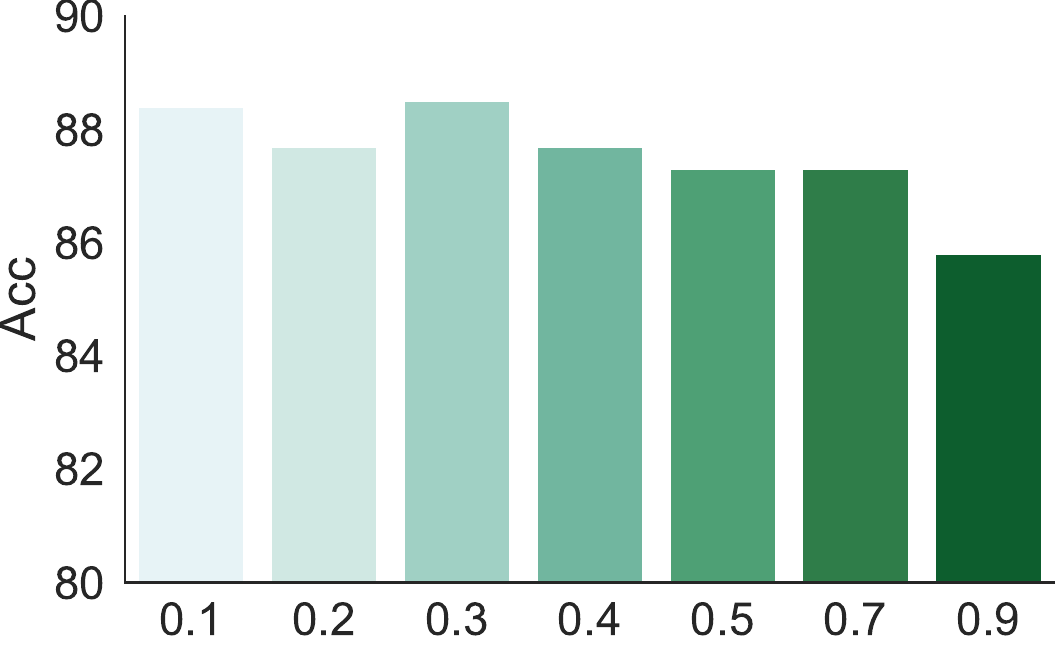}
         \caption{Temperature $\tau$}
         \label{fig:tau}
     \end{subfigure}
%\vspace{-0.3cm}
\caption[]{Ablation on (a) prototype update discount factor $\alpha$ and (b) temperature $\tau$. The results are averaged over four domains on the PACS dataset.}
%\vspace{-0.3cm}
\label{fig:alpha-temp}
\end{figure*}

\clearpage

\section{Theoretical Insights on Inter-class Separation}
\label{sec:inter-class sep}
To gain theoretical insights into inter-class separation, we focus on the learned prototype embeddings of the separation loss with a simplified setting where we directly optimize the embedding vectors. 
\begin{definition}
    (Simplex ETF~\citep{sustik2007existence}). A set of vectors $\left\{\boldsymbol{\mu}_i\right\}_{i=1}^C$ in $\mathbb{R}^d$ forms a simplex Equiangular Tight Frame (ETF) if $\left\|\boldsymbol{\mu}_i\right\|=1$ for $\forall i \in[C]$ and $\boldsymbol{\mu}_i^{\top} \boldsymbol{\mu}_j=-1 /(C-1)$ for $\forall i \neq j$.
    \end{definition} 
Next, we will characterize the optimal solution for the separation loss defined as:
\begin{align*}
\mathcal{L}_{\text{sep}} = \underbrace{\frac{1}{C} \sum_{i=1}^C \log \frac{1}{C-1} \sum_{j \neq i, j =1}^C \exp \left(\boldsymbol{\mu}_i^{\top} \boldsymbol{\mu}_j / \tau\right)}_{\uparrow \text { separation }} := \frac{1}{C} \sum_{i=1}^C \log \mathcal{L}_{\text{sep}}(i)
\end{align*}

\begin{lemma} (Optimal solution of the separation loss)
   Assume the number of classes $C \leq d + 1$, $\mathcal{L}_{\text{sep}}$ is minimized when the learned class prototypes $\left\{\boldsymbol{\mu}_i\right\}_{i=1}^C$ form a simplex ETF.
\end{lemma}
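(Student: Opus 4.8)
The plan is to show that minimizing $\mathcal{L}_{\text{sep}}$ subject to $\|\boldsymbol{\mu}_i\| = 1$ forces the Gram matrix of the prototypes to be that of a simplex ETF. First I would observe that, by Jensen's inequality applied to each summand $\log\mathcal{L}_{\text{sep}}(i)$, or more directly by a convexity/symmetrization argument, the loss is bounded below in terms of the \emph{average} inner product $\frac{1}{C(C-1)}\sum_{i \neq j} \boldsymbol{\mu}_i^\top \boldsymbol{\mu}_j$. The crucial algebraic fact is that $\sum_{i\neq j}\boldsymbol{\mu}_i^\top\boldsymbol{\mu}_j = \|\sum_{i=1}^C \boldsymbol{\mu}_i\|^2 - \sum_i \|\boldsymbol{\mu}_i\|^2 = \|\sum_i \boldsymbol{\mu}_i\|^2 - C \geq -C$, so the average of the off-diagonal inner products is at least $-\frac{1}{C-1}$, with equality precisely when $\sum_i \boldsymbol{\mu}_i = \mathbf{0}$.

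Next I would combine this with monotonicity and convexity of $\exp$ and $\log$: since $t \mapsto \exp(t/\tau)$ is convex, $\frac{1}{C-1}\sum_{j\neq i}\exp(\boldsymbol{\mu}_i^\top\boldsymbol{\mu}_j/\tau) \geq \exp\big(\frac{1}{C-1}\sum_{j\neq i}\boldsymbol{\mu}_i^\top\boldsymbol{\mu}_j/\tau\big)$, hence $\mathcal{L}_{\text{sep}} \geq \frac{1}{\tau C}\sum_i \frac{1}{C-1}\sum_{j\neq i}\boldsymbol{\mu}_i^\top\boldsymbol{\mu}_j = \frac{1}{\tau C(C-1)}\sum_{i\neq j}\boldsymbol{\mu}_i^\top\boldsymbol{\mu}_j \geq -\frac{1}{\tau(C-1)}$. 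So $\mathcal{L}_{\text{sep}}$ is lower bounded by the value $-\frac{1}{\tau(C-1)}$, and it remains to check this bound is attained exactly at a simplex ETF and \emph{only} there. For attainment, if $\{\boldsymbol{\mu}_i\}$ is a simplex ETF then every inner product equals $-\frac{1}{C-1}$, so every inner summand equals $\exp(-\frac{1}{\tau(C-1)})$, its log is $-\frac{1}{\tau(C-1)}$, and the average is the same; the bound is tight. Existence of a simplex ETF in $\mathbb{R}^d$ requires $C \leq d+1$, which is exactly the stated hypothesis (cite \citet{sustik2007existence}).

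For the converse (that equality forces the ETF structure), I would trace back through the two inequalities: equality in the Jensen step for $\exp$ forces, for each fixed $i$, all the inner products $\{\boldsymbol{\mu}_i^\top\boldsymbol{\mu}_j : j \neq i\}$ to be equal; equality in the final step forces $\sum_i\boldsymbol{\mu}_i = \mathbf{0}$. Together with $\|\boldsymbol{\mu}_i\| = 1$, a short computation shows all off-diagonal inner products must equal the common value $-\frac{1}{C-1}$, i.e., the prototypes form a simplex ETF. I would present this as: from $\boldsymbol{\mu}_i^\top\boldsymbol{\mu}_j$ being a constant $c_i$ for each $i$, symmetry of the Gram matrix and the zero-sum condition $\boldsymbol{\mu}_i^\top(\sum_j\boldsymbol{\mu}_j) = 0$ give $1 + (C-1)c_i = 0$, hence $c_i = -\frac{1}{C-1}$ for all $i$.

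The main obstacle I anticipate is the converse/uniqueness direction: being careful that equality in Jensen's inequality for a strictly convex function genuinely forces the arguments to be equal (true since $\exp$ is strictly convex, so this is clean), and correctly bookkeeping that the ``for each $i$ the inner products are equal'' conclusion, combined across all $i$ plus the zero-sum condition, pins down \emph{every} inner product rather than just row-wise constancy. A secondary point worth stating explicitly is that the lemma claims a statement about the minimizer of $\mathcal{L}_{\text{sep}}$ itself (not $\log$ of it), so I should make sure the chain of inequalities is applied to $\mathcal{L}_{\text{sep}} = \frac{1}{C}\sum_i \log \mathcal{L}_{\text{sep}}(i)$ as written, using monotonicity of $\log$ to pass from the bound on each $\mathcal{L}_{\text{sep}}(i)$ to a bound on its logarithm, and noting that all the equality conditions are simultaneously satisfiable exactly at the ETF configuration.
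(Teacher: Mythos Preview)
Your proposal is correct and shares the paper's central idea: apply Jensen's inequality to each $\log\mathcal{L}_{\text{sep}}(i)$ to reduce the problem to minimizing $\sum_{i\neq j}\boldsymbol{\mu}_i^\top\boldsymbol{\mu}_j$ under the unit-norm constraint, then check that a simplex ETF attains the resulting lower bound. Where you diverge is in how you dispatch that inner minimization. The paper sets up a Lagrangian for $\boldsymbol{\mu}^\top\boldsymbol{\mu}$ with $\boldsymbol{\mu}=\sum_i\boldsymbol{\mu}_i$, relaxes the sphere constraint to a ball, and verifies KKT conditions at the ETF configuration. You instead invoke the one-line identity $\sum_{i\neq j}\boldsymbol{\mu}_i^\top\boldsymbol{\mu}_j=\|\sum_i\boldsymbol{\mu}_i\|^2-C\geq -C$, which is more elementary and gives the same bound immediately without any optimization machinery. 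Your route also buys you the converse direction---tracing equality in Jensen (strict convexity of $\exp$) and in $\|\sum_i\boldsymbol{\mu}_i\|^2\geq 0$ to conclude that \emph{only} simplex ETFs minimize---whereas the paper's proof stops at sufficiency and does not argue uniqueness. Both approaches use the hypothesis $C\leq d+1$ only to guarantee that a simplex ETF actually exists in $\mathbb{R}^d$.
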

\begin{proof} \renewcommand{\qedsymbol}{}
\begin{align} 
\mathcal{L}_{\text{sep}}(i) &= \frac{1}{C-1} \sum_{j \neq i, j =1}^C \exp \left(\boldsymbol{\mu}_i^{\top} \boldsymbol{\mu}_j / \tau\right) \\
&\geq  \exp \left( \frac{1}{C-1} \sum_{j \neq i, j =1}^C \boldsymbol{\mu}_i^{\top} \boldsymbol{\mu}_j / \tau\right) \label{eq:JS} \\
&= \exp \left( {\boldsymbol{\mu}_i^\top\boldsymbol{\mu} - \boldsymbol{\mu}_i^\top\boldsymbol{\mu}_i \over \tau(C-1)}\right) \\
&= \exp \left( {\boldsymbol{\mu}_i^\top\boldsymbol{\mu} - 1 \over \tau(C-1)}\right) 
\end{align}
where we define $\boldsymbol{\mu} = \sum_{i=1}^C \boldsymbol{\mu}_i$ and (\ref{eq:JS}) follows Jensen's inequality. Therefore, we have 
\begin{align*}
\mathcal{L}_{\text{sep}} &=  \frac{1}{C} \sum_{i=1}^C \log \mathcal{L}_{\text{sep}}(i) \\
&\geq \frac{1}{C} \sum_{i=1}^C \log  \exp \left( {\boldsymbol{\mu}_i^\top\boldsymbol{\mu} - 1 \over \tau(C-1)}\right)  \\
&=  { 1 \over \tau C(C-1)} \sum_{i=1}^C (\boldsymbol{\mu}_i^\top\boldsymbol{\mu} - 1)\\
& = { 1 \over \tau C(C-1)} \boldsymbol{\mu}^\top\boldsymbol{\mu} - {1 \over \tau (C-1)}
\end{align*}
It suffices to consider the following optimization problem, 
\begin{align*}
        \text{minimize} \quad & \mathcal{L}_1 =  \boldsymbol{\mu}^\top\boldsymbol{\mu}\\
        \text{subject to} \quad & \|\boldsymbol{\mu}_i\| = 1 \quad \forall i\in[C]
    \end{align*}
where $\boldsymbol{\mu}^\top\boldsymbol{\mu} = (\sum_{i=1}^C\boldsymbol{\mu}_i)^\top(\sum_{i=1}^C\boldsymbol{\mu}_i) = \sum_{i=1}^C\sum_{j\neq i}\boldsymbol{\mu}_i^\top\boldsymbol{\mu}_j +C $

However, the problem is non-convex. We first consider a convex relaxation and show that the optimal solution to the original problem is the same as the convex problem below, 
\begin{align*}
    \text{minimize} \quad & \mathcal{L}_2 = \sum_{i=1}^C \sum_{j=1, j \neq i}^C \boldsymbol{\mu}_i^T \boldsymbol{\mu}_j \\
    \text{subject to} \quad & \|\boldsymbol{\mu}_i\| \leq 1 \quad \forall i\in[C]
\end{align*}
Note that the optimal solution $\mathcal{L}_1^*\geq \mathcal{L}_2^*$. Next, we can obtain the Lagrangian form:
    \begin{equation*}
        \mathcal{L}(\boldsymbol{\mu}_1, \ldots, \boldsymbol{\mu}_C, \lambda_1, \ldots, \lambda_C) = \sum_{i=1}^C \sum_{j=1, j \neq i}^C \boldsymbol{\mu}_i^T \boldsymbol{\mu}_j + \sum_{i=1}^C \lambda_i (\|\boldsymbol{\mu}_i\|^2 - 1)
    \end{equation*}

    where \(\lambda_i\) are Lagrange multipliers. Taking the gradient of the Lagrangian with respect to \(\boldsymbol{\mu}_k\) and setting it to zero, we have:
    \begin{equation*}
        \frac{\partial \mathcal{L}}{\partial \boldsymbol{\mu}_k} = 2 \sum_{i \neq k}^C \boldsymbol{\mu}_i + 2\lambda_k \boldsymbol{\mu}_k = 0
    \end{equation*}
    Simplifying the equation, we have:
    \begin{equation*}
        \boldsymbol{\mu} = \boldsymbol{\mu}_k(1 - \lambda_k)
    \end{equation*}
Therefore, the optimal solution satisfies that (1) either all feature vectors are co-linear (\emph{i.e.} $\boldsymbol{\mu}_k = \alpha_k \boldsymbol{v}$ for some vector $\boldsymbol{v}\in\mathbb{R}^d$ $\forall k\in[C]$) or (2) the sum $\boldsymbol{\mu} = \sum_{i=1}^C \boldsymbol{\mu}_i= \mathbf{0}$. 
The Karush-Kuhn-Tucker (KKT) conditions are:
    \begin{align*}
         \boldsymbol{\mu}_k(1 - \lambda_k)&=\mathbf{0}  \quad \forall k \\
        \lambda_k (\|\boldsymbol{\mu}_k\|^2 - 1) &= 0 \quad \forall k \\
        \lambda_k &\geq 0 \quad \forall k \\
        \|\boldsymbol{\mu}_k\| &\leq 1 \quad \forall k
    \end{align*}
    When the learned class prototypes $\left\{\boldsymbol{\mu}_i\right\}_{i=1}^C$ form a simplex ETF, $\boldsymbol{\mu}_k^\top \boldsymbol{\mu} =1 + \sum_{i\neq k} \boldsymbol{\mu}_i^\top \boldsymbol{\mu}_k = 1 - {C-1 \over C-1} = 0 $. Therefore, we have $\boldsymbol{\mu} = \mathbf{0}$, \( \lambda_k = 1 \), \( \|\boldsymbol{\mu}_k\| = 1 \) and KKT conditions are satisfied. Particularly, \( \|\boldsymbol{\mu}_k\| = 1 \) means that all vectors are on the unit hypersphere and thus the solution is also optimal for the original problem $\mathcal{L}_1$. The solution is optimal for $\mathcal{L}_{\text{sep}}$ as Jensen's inequality (\ref{eq:JS}) becomes equality when $\left\{\boldsymbol{\mu}_i\right\}_{i=1}^C$ form a simplex ETF. The above analysis provides insights on why $\mathcal{L}_{\text{sep}}$ promotes inter-class separation. 

\end{proof}

\end{document}